
\documentclass{article}

\usepackage{microtype}
\usepackage{graphicx}
\usepackage{subfigure}
\usepackage{booktabs} 

\usepackage{amsmath}
\usepackage{amssymb}
\usepackage{amsthm}
\usepackage{amsfonts}
\usepackage{scalerel}
\usepackage{natbib}
\usepackage{url}
\usepackage{lstbayes}


\usepackage{hyperref}


\hyphenation{Veh-ta-ri}


\newcommand{\E}{{\mathbb E}}

\newcommand{\Var}{{\mathbb V}{\rm ar}}

\newcommand{\RR}{{\mathbb R}}

\newcommand{\Pro}{{\mathbb P}}

\newcommand{\n}{\|}
\newcommand{\bi}{\begin{itemize}}
\newcommand{\ei}{\end{itemize}}
\newcommand{\be}{\begin{enumerate}}
\newcommand{\ee}{\end{enumerate}}
\newcommand{\ra}{\rightarrow}

\newcommand{\ep}{\epsilon}

\newcommand{\iy}{\infty}

\newcommand{\beq}{\begin{equation}}
\newcommand{\eeq}{\end{equation}}
\newcommand{\beqa}{\begin{eqnarray*}}
\newcommand{\eeqa}{\end{eqnarray*}}
\newcommand{\btm}{\begin{theorem}}
\newcommand{\etm}{\end{theorem}}
\newcommand{\bpf}{\begin{proof}}
\newcommand{\epf}{\end{proof}}
\newcommand{\bla}{\begin{lemma}}
\newcommand{\ela}{\end{lemma}}
\newcommand{\bdn}{\begin{definition}}
\newcommand{\edn}{\end{definition}}
\newcommand{\bpn}{\begin{proposition}}
\newcommand{\epn}{\end{proposition}}
\newcommand{\bcy}{\begin{corollary}}
\newcommand{\ecy}{\end{corollary}}
\DeclareMathOperator*{\foo}{\scalerel*{+}{\sum}}

\usepackage[accepted]{icml2018}


\icmltitlerunning{Bayesian LOO for large data}

\newtheorem{proposition}{Proposition}
\newtheorem{definition}{Definition}
\newtheorem{lemma}[proposition]{Lemma}
\newtheorem{corollary}[proposition]{Corollary}

\begin{document}

\twocolumn[
\icmltitle{Bayesian leave-one-out cross-validation for large data}



\icmlsetsymbol{equal}{*}

\begin{icmlauthorlist}
\icmlauthor{M\r{a}ns Magnusson}{aalto}
\icmlauthor{Michael Riis Andersen}{aalto,dtu}
\icmlauthor{Johan Jonasson}{chalmers}
\icmlauthor{Aki Vehtari}{aalto}
\end{icmlauthorlist}

\icmlaffiliation{aalto}{Department of Computer Science, Aalto University, Finland}
\icmlaffiliation{chalmers}{Department of Mathematical Sciences, Chalmers University of Technology and University of Gothenburg, Sweden \footnote{Research partly supported by WASP AI/Math}}
\icmlaffiliation{dtu}{Department of Applied Mathematics and Computer Science, Technical University of Denmark, Denmark}

\icmlcorrespondingauthor{M\r{a}ns Magnusson}{mans.magnusson@aalto.fi}

\icmlkeywords{Approximate inference, Leave-one-out cross-validation, Model inference, Subsampling}





]



\printAffiliationsAndNotice{} 

\begin{abstract}
Model inference, such as model comparison, model checking, and model selection, is an important part of model development. Leave-one-out cross-validation (LOO) is a general approach for assessing the generalizability of a model, but unfortunately, LOO does not scale well to large datasets. We propose a combination of using approximate inference techniques and probability-proportional-to-size-sampling (PPS) for fast LOO model evaluation for large datasets. We provide both theoretical and empirical results showing good properties for large data.
\end{abstract}

\section{Introduction}
\label{sec:intro}

Model inference, such as model comparison, checking, and selection, is an integral part of developing new models. From a Bayesian decision-theoretic point of view (see \citet{vehtari2012survey} for an extended discussion) we want to make a choice $a \in \mathcal{A}$, in our case a model $p_M$, that maximize our \emph{expected utility} for a utility function $u(a, \cdot)$ as
\[
\bar{u}(a) = \int u(a,\tilde{y_i}) p_t(\tilde{y}_i) d\tilde{y}_i\,,
\]
where $p_t(\tilde{y}_i)$ is the true probability distribution generating observation $\tilde{y}_i$.

A common scenario is to study how well a model \emph{generalizes} to unseen data \citep{box1976science,vehtari2012survey,vehtari2017practical}. A popular utility function $u$ with good theoretical properties for probabilistic models is the log score function \cite{bernardo1979expected,robert1996intrinsic,vehtari2012survey}. The log score function give rise to using the \emph{expected log predictive density} (elpd) for model inference, defined as
\[
\overline{\text{elpd}}_M= \int \log p_M(\tilde{y}_i|y) p_t(\tilde{y}_i) d\tilde{y}_i\,,
\]
where $\log p_M(\tilde{y}_i|y)$ is the log predictive density for a new observation for the model $M$. 

Leave-one-out cross-validation (LOO-CV) is one approach to estimate the elpd for a given model, and is the method of focus in this paper \cite{bernardo1994bayesian,vehtari2012survey,vehtari2017practical}. Using LOO-CV we can treat our observations as pseudo-Monte Carlo samples from $p_t(\tilde{y}_i)$ and estimate the $\overline{\text{elpd}}_\text{loo}$ as 
\begin{align}
\overline{\text{elpd}}_\text{loo} & = \frac{1}{n} \sum^n_{i=1} \log p_M(y_i|y_{-i}) \label{elpd_loo} \\ 
& = \frac{1}{n} \sum^n_{i=1} \log \int p_M(y_i|\theta) p_M(\theta | y_{-i}) d\theta \nonumber \\ 
& = \frac{1}{n} \, \text{elpd}_\text{loo}\,, \nonumber
\end{align}
where $n$ is the number of observations (that may be very large), $p_M(y_i | \theta)$ is the likelihood, and $p_M(\theta | y_{-i})$ is the posterior for $\theta$ where we hold out observation $i$. This will henceforth be called the LOO posterior and $p_M(\theta | y)$ will be referred to as the full posterior. In this paper both $\overline{\text{elpd}}_\text{loo}$ and $\text{elpd}_\text{loo}$ will be quantities of interest, depending on the situation.

Bayesian LOO-CV has many appealing theoretical properties compared to other common model evaluation techniques. The popular $k$-fold cross-validation is, in general, a biased estimator of $\text{elpd}_M$, since each model is only trained using a subset of the full data \cite{vehtari2012survey}. The LOO-CV is, just as the Watanabe-Akaike Information criteria (WAIC), a consistent estimate of the true $\text{elpd}_M$ for regular and singular models \cite{watanabe2010asymptotic}. A model is regular if the map taking the parameters to the probability distribution is one-to-one and the Fisher information is positive-definitive. If a model is not regular, then the model is singular \cite{watanabe2010asymptotic}. Since many models, such as neural networks, normal mixture models, hidden Markov models, and topic models, are singular, we need consistent methods to estimate the $\text{elpd}$ for singular models \cite{watanabe2010asymptotic}. Although the WAIC and LOO-CV have the same asymptotic properties, recent research has shown that the LOO-CV is more robust than WAIC in the finite data domain \cite{vehtari2017practical}.

In addition to the theoretical properties, the LOO-CV also gives an intuitive framework for evaluating models where the user easily can use different utility functions of interest as well as easily taking hierarchical data structures into account by using leave-one-group-out or leave-one-cluster out cross-validation (see \citet{merkle2018bayesian} for a discussion). Taken together, LOO-CV has many very good properties, both empirical and theoretical. In this paper, we will focus on LOO-CV as a way of evaluating models.

Modern probabilistic machine learning techniques need to scale to massive data. In a data-rich regimes, we often want complex models, such as hierarchical and non-linear models. Model comparison and model evaluation are important for model development, but little focus has been put into finding ways of scaling LOO-CV to larger data. The main problem is that a straight-forward implementation means that $n$ models need to be estimated. Even if this problem is solved, for example using importance sampling (see below), we still have two problems.


First (1), many posterior approximation techniques, such as Markov Chain Monte Carlo (MCMC), does not generally scale to large $n$ or is computationally very costly. Second (2), computing $\text{elpd}_\text{loo}$ still needs to be computed over $n$ observations. If it is costly to estimate individual contributions (i.e. $\log p_M(y_i|y_{-i})$), computing the total $\text{elpd}_\text{loo}$ may be very costly for very large models.

\subsection{Pareto-smoothed importance sampling}

If we would implement LOO-CV naively, inference needs to be repeated $n$ times for each model. \citet{gelfand1996model} propose the use of importance sampling to solve this problem. The idea is to estimate $p_M(y_i|y_{-i})$ in Eq. \eqref{elpd_loo} using the importance sampling approximation 
\begin{align}
\label{eq:is_estimate}
\log \hat{p}(y_i|y_{-i}) = \log\left( \frac{\frac{1}{S} \sum_{s=1}^S p_M(y_i|\theta_s) r(\theta_s)}{\frac{1}{S} \sum_{s=1}^S r(\theta_s)} \right), 
\end{align}
where $\theta_s$ are $s \in 1,...,S$ draws from the full posterior $p(\theta|y)$, and 
\begin{align}
r(\theta_s) & = 
\frac{p_M(\theta_s|y_{-i})}{p_M(\theta_s|y)} \nonumber \\
& \propto \frac{1}{p_M(y_i|\theta_s)}  \nonumber\,,
\end{align}
where the last step is a well-known result of \citet{gelfand1996model}. The ratios $r(\theta_s)$ can be unstable due to a long right tail, but this can be resolved using Pareto-smoothed importance sampling (PSIS) \citep{vehtari2015pareto}. Using PSIS we fit a generalized Pareto distribution to the largest weights $r(\theta_s)$ and replace the largest importance sample ratios with order statistics from the estimated generalized Pareto distribution, decreasing the variance by introducing a small bias. PSIS also has the benefit that we can use the estimated shape parameter $\hat{k}$ from the generalized Pareto distribution to determine the reliability of the estimate. For data-points with $\hat{k}>0.7$ the estimates of $\log p(y_i|y_{-i})$ \emph{can} be unreliable and hence $\hat{k}$ can be used as a diagnostic \citep{vehtari2017practical}.

However, PSIS-LOO has the same scaling problem as LOO-CV in general since it requires (1) samples from the true posterior (e.g. using MCMC) and (2) the estimation of the $\text{elpd}_\text{loo}$ contributions from all observations \citep{gelfand1996model,vehtari2017practical}. Both of these requirements can be costly in a data-rich regime and are the main problems we address in this paper.

\subsection{Contributions and limitations}
In this paper, we focus on the problems of LOO-CV for large datasets and our contributions are three-fold. First, we extend the method of \citet{gelfand1996model} to posterior approximations by including a correction term to the importance sampling weights. Second, we propose sampling individual $\text{elpd}_\text{loo}$ components with probability-proportional-to-size sampling (PPS) to estimate $\text{elpd}_\text{loo}$. Third, we show theoretically that these contributions have very favorable asymptotic properties as $n \rightarrow \infty$. We show that the proposed estimator for $\text{elpd}_\text{loo}$ is consistent for any consistent posterior approximation $q$ (such as Laplace approximations, mean-field, and full-rank variational inference posterior approximations). We also show that the variance due to subsampling will decrease as the number of observations $n$ grows. In the limit, and given the assumptions in Section \ref{subsec:theoretical}, we only need one subsampled observation, and one draw from the full posterior, to estimate $\overline{\text{elpd}}_\text{loo}$ with zero variance. Taken together this introduces a new, fast, and theoretically motivated approach to  model evaluation for large datasets.

The limitations of our approach are the same as using PSIS-LOO \citep{vehtari2017practical} as well as the requirement that the approximate posterior needs to be sufficiently close to the true posterior \citep[see][for a discussion]{yao18a}.

\section{Bayesian leave-one-out cross-validation for large data sets}
\label{sec:methods}

Leave-one-out cross-validation (LOO-CV) has very good theoretical and practical properties. This makes it relevant to develop tools to scale LOO-CV. We solve this problem using scalable posterior approximations, such as Laplace and variational approximations and using probability-proportional-to-log-predictive-density subsampling inspired by \citet{hansen1943}.

\subsection{Estimating the elpd using posterior approximations}

Laplace and variational posterior approximations are attractive for fast model comparisons due to their computational scalability.  Laplace approximation approximate the posterior distribution with multivariate normal distribution $q_{Lap}(\theta|y)$ with the mean being the mode of the posterior and the covariance the inverse Hessian at the mode \cite{azevedo1994laplace}.

In variational inference, we  minimize the Kullback-Leibler (KL) divergence between an approximate family $\mathcal{Q}$ of densities and the true posterior $p(\theta|y)$ \cite{jordan1999introduction,blei2017variational}. Hence we find the approximation $q$ that is closest to the true posterior in a KL divergence sense. Here we let $\mathcal{Q}$ be a family of multivariate normal distributions with a diagonal covariance structure (mean-field) or a full covariance structure (full-rank). Hence we will work with a mean-field variational approximation $q_{MF}(\theta|y)$ and a full-rank variational approximation $q_{FR}(\theta|y)$.

Although, all these posterior approximations, $q_{Lap}(\theta|y), q_{MF}(\theta|y)$, and $ q_{FR}(\theta|y)$, will, in general, be different than the true posterior distribution, we can use them as a proposal distribution in an importance sampling scheme. In this scheme we use a posterior approximation $q_M(\theta|y)$ for a model $M$ as the proposal distribution and $p_M(\theta|y_{-i})$, the LOO posterior, as our target distribution. The expectation of interest is the same as in the standard PSIS-LOO given by Eq. \eqref{eq:is_estimate}, but we also propose to correct for the posterior approximation error. Hence we change $r(\theta)$ to
\begin{align}
\label{eq:adviloo_weights}
r(\theta_s) & = 
\frac{p_M(\theta_s|y_{-i})}{q_M(\theta_s|y)} \nonumber \\
& = \frac{p_M(\theta_s|y_{-i})}{p_M(\theta_s|y)} \frac{p_M(\theta_s|y)}{q_M(\theta_s|y)} \\ 
& \propto \frac{1}{p_M(y_i|\theta_s)} \frac{p_M(\theta_s|y)}{q_M(\theta_s|y)} \nonumber\,.
\end{align}

The two-factor approach is needed to take the posterior approximation into account. The factorization in Eq. \eqref{eq:adviloo_weights} shows that the importance correction contains two parts, the correction from the full posterior to the LOO posterior and the correction from the full approximate distribution to the full posterior. Both components often have lighter tailed proposal distribution than the corresponding target distribution which can increase the variance of the importance sampling estimate \citep{geweke1989,gelfand1996model}.

Pareto-smoothed importance sampling can be used to both stabilize the weights in estimating the contributions to the $\text{elpd}_\text{loo}$ and in evaluating variational inference approximations using $\hat{k}$ as a diagnostic \citep{vehtari2015pareto,yao18a}. Hence we use PSIS to stabilize the weights with the additional benefit that we can use $\hat{k}$, the shape parameter in the generalized Pareto distribution, to diagnose how well the approximation is working \citep{vehtari2015pareto}.

\subsection{Probability-proportional-to-size subsampling and Hansen-Hurwitz estimation}

Using PSIS we can estimate each $\log \hat{p}(y_i|y_{-i})$ term and sum them to estimate $\text{elpd}_{\text{loo}}$. Estimating every $\log \hat{p}(y_i|y_{-i})$ can be costly, especially as $n$ grows. In some situations using PSIS-LOO, estimating $\text{elpd}_{\text{loo}}$ can take even longer than computing the full posterior once, due to the computational burden of computing $\log \hat{p}(y_i|y_{-i})$, estimating $\hat{k}$ and using the generalized Pareto distribution to stabalize the weights for each individual observation. To handle this problem we suggest using a sample of the $\text{elpd}_{\text{loo}}$ components to estimate $\text{elpd}_{\text{loo}}$. 

Estimating totals, such as $\text{elpd}_{\text{loo}}$, has a long tradition in sampling theory \citep[see][]{cochran77}. If we have auxiliary variables that are a good approximation of our variable of interest, we can use a probability-proportional-to-size (PPS) sampling scheme to reduce the sampling variance in the estimate of $\text{elpd}_{\text{loo}}$ using the \emph{unbiased} Hansen-Hurwitz (HH) estimator \citep{hansen1943}. When evaluating models, we can often easily compute $\log p_M(y_i|y)$, the full posterior log predictive density, for all observations. We then sample $m<n$ observations proportional to $\tilde{\pi}_i \propto \pi_i = -\log p_M(y_i|y) = -\log \int p_M(y_i|\theta) p_M(\theta|y) d\theta$. We here assume that all $\log p_M(y_i|y) < 0$, but this assumption is only for convenience. 

In the case of regular models and large $n$, we can also approximate $\log p_M(y_i|y) \approx \log p_M(y_i|\hat{\theta})$ where $\hat{\theta}$ can be a Laplace posterior mean estimate $\hat{\theta}_q$ or a VI mean estimate $\E_{\theta \sim q}[\theta]$. In the case of VI and Laplace approximations, this further speeds up the computation of the $\tilde{\pi}_i$s since we do not need to integrate over $\theta$ for all $n$ observations. Using a sampling with probability-proportional-to-size scheme, the estimator for $\text{elpd}_\text{loo}$ can be formulated as 

\begin{align}
\label{eq:hh_estimate}
\widehat{\overline{\text{elpd}}}_{\text{loo},q} = \frac{1}{n} \frac{1}{m} \sum^m_i  \frac{1}{\tilde{\pi}_i} \log \hat{p}(y_i|y_{-i})\,, 
\end{align}
where $\tilde{\pi}_i$ is the probability of subsampling observation $i$, $\log \hat{p}(y_i|y_{-i})$ is the (self-normalized) importance sampling estimate of $\log p(y_i|y_{-i})$ given by Eq. \eqref{eq:is_estimate} and \eqref{eq:adviloo_weights}, and $m$ is the subsample size. The variance estimator can be expressed as \citep[see][Theorem 9A.2.]{cochran77}.

\begin{align}
\label{eq:hh_var_estimate}
& v(\widehat{\overline{\text{elpd}}}_{\text{loo},q}) = \nonumber\\
& \frac{1}{n^2 m (m-1)}\sum^m_{i=1} \left(\frac{\log\hat{p}(y_i|y_{-i})}{\tilde{\pi}_i}-n \widehat{\overline{\text{elpd}}}_{\text{loo}}\right)^2\,.
\end{align}

The benefits of the HH estimator are many. First, if the probabilities are proportional to the variable of interest ($\log\hat{p}(y_i|y_{-i})$ here), the variance in Eq. \eqref{eq:hh_var_estimate} will go to zero, a property of use in the asymptotic analysis in Section \ref{subsec:theoretical}. Second, the estimator of $\overline{\text{elpd}}_\text{loo}$ is not limited to posterior approximation methods, but can also be used with MCMC (but without the importance sampling correction factor). Third, PPS sampling has the benefit that we can use Walker-Alias multinomial sampling \citep{walker1977efficient}. By building an Alias table in $O(n)$ time we can then sample a new observation in $O(1)$ time. This means that can continue to sample observations until we have sufficient precision in $\overline{\text{elpd}}_\text{loo}$ for our model comparison purposes, independent of the number of observations $n$. Fourth, the estimator is unbiased for all $\tilde{\pi}_i$. So by using $\log p(y_i|\hat{\theta})$ instead of $\log p(y_i|y)$ we would expect a small increase in variance since we would expect that for finite $n$, $\log p(y_i|y)$ would be a better approximation of $\log p(y_i|y_{-i})$ than $\log p(y_i|\hat{\theta})$, but at a greater computational cost.

To compare models, we are often also interested in the variance of $\overline{\text{elpd}}_\text{loo}$, or for the dataset, henceforth called $\sigma^2_\text{loo}$. To estimate $\sigma^2_\text{loo}$ we can use the same observations as sampled previously, as
\begin{align}
\label{eq:hh_se_estimate}
\hat{\sigma}^2_\text{loo} & = \frac{1}{n m} \sum^m_i  \frac{\hat{p}^2_i}{\tilde{\pi}_i} + \\
& \frac{1}{n^2 m (m - 1)} \sum^m_i \left( \frac{\hat{p}_i}{\tilde{\pi}_i} - \frac{1}{m} \sum^m_i \frac{\hat{p}_i}{\tilde{\pi}_i} \right)^2 - \nonumber\\
& \left(\frac{1}{n m} \sum^m_i \frac{\hat{p}_i}{\tilde{\pi}_i} \right)^2 \nonumber
\end{align} 
where $\hat{p}_i = \log \hat{p}(y_i|y_{-i})$.  For a proof of unbiasedness of the $\hat{\sigma}^2_\text{loo}$ estimator for $\sigma^2_\text{loo}$ in Eq. \eqref{eq:hh_se_estimate}, see the supplementary material. Also, note that here  $\sigma^2_\text{loo} = \frac{1}{n}\sum_i^n (\hat{p}^2_i - (\frac{1}{n}\sum_i^n \hat{p}^2_i)^2)$, which in itself is not an unbiased estimate for the true $\sigma^2_\text{loo}$ \cite{bengio2004no}.

Although the variance estimator is unbiased, it is not as efficient as the estimator of $\text{elpd}_\text{loo}$. This is part due to the fact that $\tilde{\pi}_i$ is not proportional to $\hat{p}^2_i$ in the first line in Eq. \eqref{eq:hh_se_estimate}. This can be solved by sampling in two steps both proportional to $\hat{p}_i$ and $\hat{p}^2_i$.

\subsection{Asymptotic properties}
\label{subsec:theoretical}

For larger data sets the asymptotic properties of the method are crucial and we derive asymptotic properties for the methods as follows. We consider a generic Bayesian model; a sample $(y_1,y_2,\ldots,y_n)$, $y_i \in \mathcal{Y} \subseteq \RR$, is drawn from a true density $p_t = p(\cdot|\theta_0)$ for some true parameter $\theta_0$. The parameter $\theta_0$ is assumed to be drawn from a prior $p(\theta)$ on the parameter space $\Theta$, which we assume to be an open and bounded subset of $\RR^d$. A number of conditions are used. They are as follows.

\begin{itemize}
  \item[(i)] the likelihood $p(y|\theta)$ satisfies that there is a function $C:\mathcal{Y} \ra \RR_+$, such that $\E_{y \sim p_t}[C(y)^2] < \iy$ and such that for all $\theta_1$ and $\theta_2$, $|p(y|\theta_1)-p(y|\theta_2)| \leq C(y)p(y|\theta_2)\n\theta_1-\theta_2\n$.
   \item[(ii)] $p(y|\theta)>0$ for all $(y,\theta) \in \mathcal{Y} \times \Theta$,
   \item[(iii)] There is a constant $M<\iy$ such that $p(y|\theta) < M$ for all $(y,\theta)$,
  \item[(iv)] all assumptions needed in the Bernstein-von Mises (BvM) Theorem \cite{walker1969asymptotic},
  \item[(v)] for all $\theta$, $\int_{\mathcal{Y}}(-\log p(y|\theta))p(y|\theta)dy < \iy$.
\end{itemize}

Of these assumptions, (i) and (iv) are the most restrictive. The assumption that the parameter space is bounded is not very restrictive in practice since we can approximate any proper prior arbitrarily well with a  truncated approximation.



\begin{proposition} \label{pa}
Let the subsampling size $m$ and the number of posterior draws $S$ be fixed at arbitrary integer numbers, let the sample size $n$ grow, assume that (i)-(v) hold and let $q=q_n(\cdot|y)$ be any consistent approximate posterior. Write $\hat{\theta}_q = \arg\max\{q(\theta): \theta \in \Theta\}$ and assume further that $\hat{\theta}_q$ is a consistent estimator of $\theta_0$. Then

\[|\widehat{\overline{\text{elpd}}}_\text{loo}(m, q)-\overline{\text{elpd}}_{loo}| \ra 0\]
in probability as $n \ra \iy$ for any of the following choices of $\pi_i$, $i=1,\ldots,n$.

\begin{itemize}
  \item[(a)] $\pi_i= -\log p(y_i|y)$,
  \item[(b)] $\pi_i= - \E_y[\log p(y_i|y)]$,
  \item[(c)] $\pi_i= - \E_{\theta \sim q}[\log p(y_i|\theta)]$,
  \item[(d)] $\pi_i = -\log p(y_i|\E_{\theta \sim q}[\theta])$,
  \item[(e)] $\pi_i = -\log p(y_i|\hat{\theta}_q)$.
\end{itemize}

\end{proposition}


\begin{proof}
See the supplementary material.
\end{proof}

This proposition has three main points. First, the estimator of the $\widehat{\overline{\text{elpd}}}_\text{loo}$ is consistent for any consistent posterior approximation. In the limit, the mean-field variational approximation will also estimate the true $\overline{\text{elpd}}_\text{loo}$. Second, the estimator is also consistent irrespective of the sub-sampling size $m$ and the number of draws, $S$, from the posterior. This is a very good scaling characteristic. Third, the estimator is consistent also if we approximate $\tilde{\pi}_i$ with $\log p(y_i | \hat{\theta}_{q})$. This means that for larger data we can plug in point estimates to quickly compute $\tilde{\pi}_i$ and still have the consistency property.

The main limitations with Proposition \ref{pa} are that it is based on the consistency of the posterior approximations and the proposition does only hold for regular models for which $q$ are consistent. This is mainly due to the fact that Laplace and VI are not, in general, consistent for singular models.

\subsection{Computational complexity}
\label{subsec:complexity}

In the large $n$ domain it is also of interest to study the computational complexity of our approach. Assuming that the additional cost of computing $p(y_i|y_{-i})$ compared to the point log predictive density (lpd) at $\hat{\theta}$, $\log p(y_i|\hat{\theta})$, is $O(S)$, where $S$ is the number of samples from the full posterior. Then the cost of computing the full $\text{elpd}_\text{loo}$ is 
\[
O(nS)\,.
\]
If we instead use our proposed method we would have the complexity
\[
O(n + mS)\,,
\]
where $m$ is the subsampling size. Using the proposed approach, we get an \emph{unbiased} estimate of $\text{elpd}_\text{loo}$ together with the variance $v(\widehat{\text{elpd}}_\text{loo})$ of that estimate, giving us information on the precision of the method for a given $m$.

Finally, we could, for large $n$ just use the same lpd as an approximation with complexity
\[
O(n)\,.
\]
This estimate is though \emph{biased} for all finite $n$, and we have no diagnostic indicating how good or bad the approximation is.

This shows the large-scale characteristic of our proposed approach. By adding a small cost ($mS$), we will have a good estimate of the true $\text{elpd}_\text{loo}$ at the same cost as computing just the lpd. If using the point lpd is a good approximation we would need less $m$. On the other hand, if the point lpd would be a bad approximation, we would need a larger $m$. The variance estimator in Eq. \eqref{eq:hh_var_estimate} would in these situations serve as an indicator, with a higher variance estimate.

\subsection{Method summary}

We have presented a method for estimating the $\text{elpd}$ efficiently using posterior approximation and PPS subsampling. One of the attractive properties of the method is that we can diagnose if the method is working. Using PSIS-LOO we can diagnose the estimation of each individual $\log \hat{p}(y_i|y_{-i})$ as well as the overall posterior approximation using the $\hat{k}$ diagnostic. Then the variance of the HH estimator in Eq. \eqref{eq:hh_var_estimate} captures the effect of the subsampling in the finite $n$ case. Our approach for large-scale LOO can be summarized in the following steps. 

\begin{enumerate}
    \item Estimate the models of interest using any consistent posterior approximation technique.
    \item Compute the $\hat{k}$ diagnostic for the posterior to asses the general overall posterior approximation. See \cite{yao18a} for an example for variational inference.
    \item Compute $\tilde{\pi}_i \propto -\log p(y_i|y)$ for all $n$ observations. For regular models this can be approximated with $\tilde{\pi}_i \propto-\log p(y_i|\hat{\theta})$ for large data.
    \item Sample $m$ observations using PPS sampling and compute $\log \hat{p}(y_i|y_{-i})$ using Eq. \eqref{eq:is_estimate} and \eqref{eq:adviloo_weights} for the sampled observations. Use $\hat{k}$ to diagnose the estimation of each individual $\log \hat{p}(y_i|y_{-i})$. 
    \item Estimate  $\widehat{\text{elpd}}_{\text{loo}}$, $v(\widehat{\text{elpd}}_{\text{loo}})$, and $\hat{\sigma}^2_\text{loo}$ using Eq. \eqref{eq:hh_estimate}, \eqref{eq:hh_var_estimate}, and \eqref{eq:hh_se_estimate} to compare model predictive performance. 
    \item Repeat step 3 and 4 until sufficient precision is reached.
\end{enumerate}

The downside is that the $\hat{k}$ diagnostic can be too conservative for our purpose. In the case of a correlated posterior and mean-field variational inference, $\hat{k}$ may indicate a poor approximation even though the estimation of $\text{elpd}_\text{loo}$ is still consistent and may work well. In this situation, we would get a result indicating that all $\log \hat{p}(y_i|y_{-i})$ are problematic, even though the estimation actually work well, something we will see in the experiments.

\section{Experiments}
\label{sec:experiments}

To study the characteristic of the proposed approach we study multiple models and datasets. We use simulated datasets used to fit a Bayesian linear regression model with $D$ variables and $N$ observations. The data is generated such that so we get either a correlated (c) or an independent (i) posterior for the regression parameters by construction. This will enable us to study the effect of the mean-field assumptions in variational posterior approximations. 
In addition, we use data from the radon example of \citet{lin1999analysis} to show performance on a larger dataset with multiple models. 

All posterior computations uses Stan 2.18 \cite{carpenter2017stan,standev2018stancore} and all models used can be found in the supplementary material. The methods has been implemented using the \texttt{loo} R package \cite{vehtari2018loo} framework for Stan and is available as supplementary material. We use mean-field and full-rank Automatic Differentiation Variational Inference (ADVI) \citep{kucukelbir2017automatic} and Laplace approximations as implemented in Stan. ADVI automatically handles constrained variables and uses stochastic variational inference.

\subsection{Estimating $\text{elpd}_\text{loo}$ using posterior approximations}

\begin{table*}[ht]
\centering
\begin{tabular}{llrrrr}
  \toprule
Data &  & ADVI(FR) & ADVI(MF) & Laplace & MCMC \\ 
  \midrule
LR(c) 100D & $\text{elpd}_\text{loo}$ & -14249 & -14267 & -14247 & -14247 \\ 
   & $\hat{k}>0.7$ (\%) & 100 & 100 & 0 & 0 \\ 
  \midrule
  LR(c) 10D & $\text{elpd}_\text{loo}$ & -14271 & -14271 & -14272 & -14272 \\ 
   & $\hat{k}>0.7$ (\%) & 0 & 100 & 0 & 0 \\ 
  \midrule
  LR(i) 100D &  $\text{elpd}_\text{loo}$ & -14193 & -14239 & -14238 & -14239 \\ 
   & $\hat{k}>0.7$ (\%) & 100 & 0 & 0 & 0 \\ 
  \midrule
  LR(i) 10D &  $\text{elpd}_\text{loo}$ & -14202 & -14202 & -14202 & -14203 \\ 
   & $\hat{k}>0.7$ (\%) & 0 & 0 & 0 & 0 \\ 
   \bottomrule
\end{tabular}
\caption{Estimation of $\text{elpd}_\text{loo}$ using posterior approximations. For all models and posterior approximations, $\sigma_\text{LOO} \approx 70$. No subsampling is used and MCMC is gold standard.} 
\label{tab:elpd}
\end{table*}

Table \ref{tab:elpd} contains the estimatied values of $\text{elpd}_\text{loo}$ for different posterior approximations. We used we used 100~000 iterations for ADVI and 1000 warmup iterations and 2000 samples from 2 chains for the MCMC. From the table, we can see that using PSIS-LOO and posterior approximations to estimate $\text{elpd}_\text{loo}$ works well or diagnostic correctly indicates the failure. As we would expect, the mean-field approximation for the correlated posterior does not approximate the true posterior very well when the posterior has correlated parameters and the $\hat{k}$ values are too high for all observations. In spite of the high $\hat{k}$ values, the estimate of the $\text{elpd}_\text{loo}$ is not very far from the (gold-standard) MCMC estimate, showing the consistency result in Prop. \ref{pa} for mean-field approximations - even when the true posterior covariance structure is not in the variational family.

The second result is that the full-rank VI approximation has a poor fit for a large number of parameters ($D = 100$). This comes from that the full rank ADVI needs to approximate the full posterior covariance structure (with ~5~000 parameters) based on stochastic gradients. The increased perturbation in the estimate of the covariance matrix has the effect of increasing the overall $\hat{k}$, especially for larger dimensions indicating a less good approximation of the posterior. 

\subsection{Subsampling using PPS sampling}

\begin{table}[ht]
\centering
\begin{tabular}{lllrrr}
  \toprule
Data & $m$ & Method & $\widehat{\text{elpd}}_\text{loo}$ & SE($\widehat{\text{elpd}}_\text{loo}$) & $\hat{\sigma}_\text{loo}$ \\ 
  \midrule
LR(c) & - & \textbf{True} & \textbf{-14247} & \textbf{0} & \textbf{71} \\ 
  100D & 10 & PPS(1) & -14236 & 11.8 & 23 \\ 
   &  & PPS(2) & -14244 & 13.6 & 67 \\ 
   &  & SRS & -14234 & 2197.8 & 70 \\ \cline{2-6}
   & 100 & PPS(1) & -14248 & 4.3 & 67 \\ 
   &  & PPS(2) & -14249 & 4.5 & 73 \\ 
   &  & SRS & -13823 & 598.8 & 60 \\ \cline{2-6}
   & 1000 & PPS(1) & -14245 & 1.5 & 69 \\ 
   &  & PPS(2) & -14248 & 1.5 & 71 \\ 
   &  & SRS & -14068 & 212.9 & 67 \\ 
  \midrule   
  LR(c) & - & \textbf{True} & \textbf{-14272} & \textbf{0} & \textbf{71} \\ 
  10D & 10 & PPS(1) & -14272 & 3.2 & 88 \\ 
   &  & PPS(2) & -14269 & 3.6 & 81 \\ 
   &  & SRS & -18096 & 3310.2 & 105 \\ \cline{2-6}
   & 100 & PPS(1) & -14272 & 1.2 & 87 \\ 
   &  & PPS(2) & -14272 & 1.1 & 78 \\ 
   &  & SRS & -13921 & 669.9 & 67 \\ \cline{2-6}
   & 1000 & PPS(1) & -14272 & 0.4 & 75 \\ 
   &  & PPS(2) & -14272 & 0.4 & 69 \\ 
   &  & SRS & -14266 & 223.3 & 71 \\ 
   \bottomrule
\end{tabular}
\caption{Effect of subsampling proportional to log predictive density. The result are based on MCMC draws and $\hat{\theta}$ is the posterior mean for the parameters. PPS(1) is subsampling proportional to $-\log(p(y_i|y))$, PPS(2) is subsampling proportional to $-\log(p(y_i|\hat{\theta}))$, and SRS is simple random sampling.} 
\label{tab:hh}
\end{table}

Table \ref{tab:hh} shows empirical results on the effect of using subsampling proportional to the predictive density compared to simple random sampling. The results are much in line with what we would expect from the theory presented in Section \ref{subsec:theoretical}. We can see that the proposed method, sampling proportional to $-\log(p(y_i|y))$ (PPS(1)) and sampling proportional to $-\log(p(y_i|\hat{\theta}))$ (PPS(2)) outperforms simple random sampling with orders of magnitude. Using just a sample size of $m=10$ observations using our proposed method is much more precise than using $m=1000$ observations with simple random sampling, although we can see that the estimate $\hat{\sigma}_\text{loo}$ is less reliable for such small sample sizes. This results can be explained by the sampling probabilities used in the subsampling procedure. Figure \ref{fig:ppspi} show the distribution of sampling probabilities where we can see that the probabilities are highly skewed, indicating the reason for the inefficiency of the SRS compared to the proposed approach. Table \ref{tab:hh} also show that sampling with $\tilde{\pi}\propto-\log(p(y_i|\hat{\theta}))$ does not cost us very much in precision when estimating $\text{elpd}_\text{loo}$. In many situations with larger data we would expect that using a point estimate, $\hat{\theta}$, of the parameters in computing the likelihood values would be much faster than computing $-\log(p(y_i|y))$. 
  
\begin{figure}[htbp]
  \label{fig:ppspi}
  {\includegraphics[width=1\linewidth]{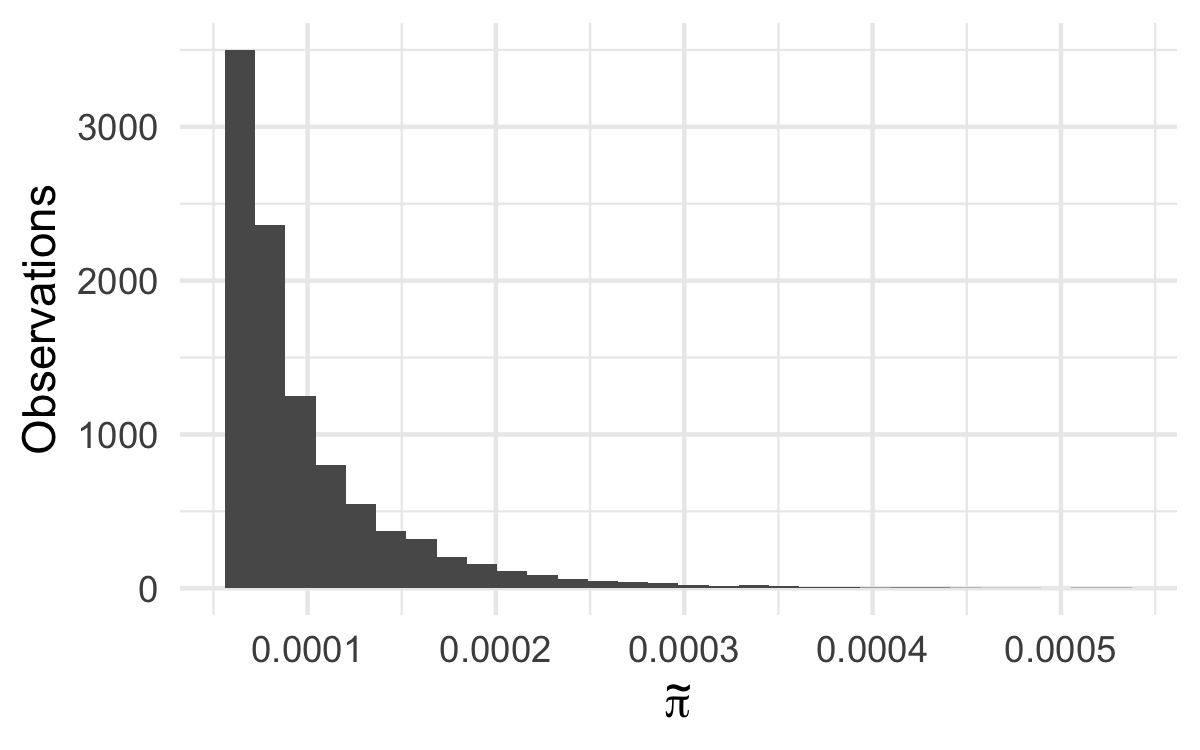}}
  {\caption{Sampling probabilities ($\tilde{\pi}$) for the LR(cor) 100D data and $\pi_i = -\log p(y_i|y)$. The results are very similar for the other LR data and for $\pi_i = -\log p(y_i|\hat{\theta})$.}}
\end{figure}

\begin{table}[ht]
\centering
\begin{tabular}{lrrrr}
  \toprule
$m$ &  $n$ & PPS & SRS & $\sigma_\text{loo}$ \\ 
  \midrule
10   & 100 & 3.6 & 16 & 8 \\ 
    & 1000 & 2.7 & 361 & 22 \\ 
    & 10000 & 3.8 & 9351 & 72 \\ 
    & 100000 & 17.5 & 19721 & 225 \\ 
  \midrule   
100  & 100 & 1.2 & 10 & 8 \\ 
    & 1000 & 1.3 & 84 & 22 \\ 
    & 10000 & 1.4 & 1144 & 72 \\ 
    & 100000 & 6.2 & 5894 & 225 \\ 
   \bottomrule
\end{tabular}
\caption{Standard errors,  SE($\widehat{\text{elpd}}_\text{loo}$), for PPS and SRS subsampling in relationship with $\sigma_\text{loo}$. The result are based on MCMC draws and $\hat{\theta}$ is the posterior mean for the parameters. PPS is subsampling proportional to  $-\log(p(y_i|\hat{\theta}))$, and SRS is simple random sampling.} 
\label{tab:hh_nsize}
\end{table}

Table \ref{tab:hh_nsize} shows empirical results on the scaling characteristics of the proposed method. We can see that for the PPS estimator, as the size of the data, $n$, increases, the variance of the estimator is more or less constant. Using a SRS sampling scheme, on the other hand, clearly show that to estimate the total $\text{elpd}_\text{loo}$, we would need to increase the sample size, $m$, as the number of observations, $n$, increases.

\subsection{Hierarchical models for radon measurements}

As an example of how the proposed method can be used, we exemplify with the dataset of \cite{lin1999analysis}, used as an example of hierarchical modeling by \citet{gelman2006data}.\footnote{We base our example and data on the Stan case study by Chris Fonnesbeck at \url{https://mc-stan.org/users/documentation/case-studies/radon.html}} The data make up a total of 12~573 home radon measurements in a total of 386 counties with a different number of observations per county. This example is enlightening for a number of reasons. First, it is large enough to actually take some computing time to analyze, but is still small enough so we can use MCMC to compute the full data $\text{elpd}_\text{loo}$ as gold standard. The models used here are also both regular and singular, showing the usability in a broader class of models. Finally, this example also shows how we can mix different approximation techniques for different models when doing model comparisons.

We compare seven different linear models of predicting the log radon levels in individual houses based on floor measurements and county uranium levels. The seven models are a pooled simple linear model (model 1), a non-pooled model with one intercept estimated per county (model 2), a partially pooled model with a hierarchical mean parameter per county (model 3),
a variable intercept model per county (model 4),
a variable slope model per county (model 5),
a variable intercept and slope model (model 6),
and finally a model with a county level features and county level intercepts
using the log uranium level in the county. We use vague priors based on the Stan prior choice recommendations\footnote{See \url{https://github.com/stan-dev/stan/wiki/Prior-Choice-Recommendations}} with $N(0,10)$ priors on regression coefficients and intercepts and half-$N(0,1)$ for variance parameters. We ran all models using Laplace, ADVI(FR), ADVI(MF) and MCMC. We ran the ADVI approximations for 100~000 iterations and for MCMC we use the Stan standard dynamic HMC algorithm using 2 chains and 500 warmup iterations per chain. For all models we used 1000 posterior samples to compute the $\text{elpd}_\text{loo}$. 

\begin{table}[ht]
\centering
\begin{tabular}{rrrr}
  \toprule
$M$ & Laplace & ADVI(FR) & ADVI(MF) \\ 
  \midrule
  1 & 0.24 & 0.23 & 0.34 \\ 
  2 & 0.93 & 5.11 & 0.45 \\ 
  3 & 1.44 & 4.19 & 0.28 \\ 
  4 & 2.05 & 6.99 & 0.71 \\ 
  5 & - & 5.62 & 1.04 \\ 
  6 & - & 10.99 & 2.98 \\ 
  7 & 1.70 & 7.39 & 0.89 \\ 
   \bottomrule
\end{tabular}
\caption{Posterior $\hat{k}$ values for the different radon models and the model approximations. Laplace was not possible for model 5 and 6.} 
\label{tab:post_k_hat}
\end{table}

Table \ref{tab:post_k_hat} show the $\hat{k}$ values for different approximations for the different models. We can see that for the simplest model (1) we get a good posterior approximation with just Laplace approximation, but as the models become more complex (and singular) we need better approximation techniques such as ADVI. We can see that ADVI (mean-field) in general perform well and can be used for inference in many models, even though more complex models (such as model 4-7) is not approximated sufficiently well using the mean-field approximation. ADVI full-rank, again, have problems due to the larger number of parameters in the more complex models. 

\begin{table}[ht]
\centering
\begin{tabular}{rlrrrr}
  \toprule
$M$ & Method & $\widehat{\text{elpd}}_\text{loo}$ & $SE$ & $\text{elpd}_\text{loo}$ & $\text{elpd}_\text{kcv}$ \\ 
  \midrule
1 &  Laplace & -18560 & 0.3 & -18560 & -18561 \\ 
   & ADVI(FR) & -18562 & 1.0 & -18559 & -18560 \\ 
   & ADVI(MF) & -18564 & 2.1 & -18559 & -18558 \\ 
   & MCMC & -18559 & 0.3 & -18560 & -18559 \\ 
  \midrule   
  2 &  Laplace & -17058 & 31.9 & -17049 & -17142 \\ 
   & ADVI(MF) & -17068 & 50.1 & -17059 & -17105 \\ 
   & MCMC & -17064 & 30.7 & -17067 & -17110 \\ 
  \midrule   
  3 &  Laplace & -17035 & 33.0 & -17017 & -17117 \\ 
   & ADVI(MF) & -17097 & 20.7 & -17090 & -17090 \\ 
   & MCMC & -17068 & 17.4 & -17085 & -17102 \\
  \midrule   
  4 &  Laplace & -17003 & 66.6 & -16866 & -17057 \\ 
   & ADVI(MF) & -16990 & 19.8 & -17013 & -17034 \\ 
   & MCMC & -17043 & 19.5 & -17022 & -17049 \\ 
  \midrule   
  5 & ADVI(MF) & -18223 & 37.6 & -18225 & -18285 \\ 
   & MCMC & -18295 & 52.2 & -18253 & -18308 \\ 
  \midrule   
  6 & ADVI(MF) & -16656 & 90.8 & -16603 & -16869 \\ 
   & MCMC & -16835 & 63.2 & -16798 & -16864 \\ 
    \midrule
  7 &  Laplace & -17096 & 45.8 & -17063 & -17140 \\ 
   & ADVI(MF) & -16996 & 25.4 & -16957 & -17035 \\ 
   & MCMC & -17126 & 27.3 & -17136 & -17050 \\ 
   \bottomrule
\end{tabular}
\caption{The estimated $\widehat{\text{elpd}}_\text{loo}$ using a subsample of size $m=500$ and its standard error (SE). The full $\text{elpd}_\text{loo}$ based on all observations is also included as well as $\text{elpd}_\text{kcv}$, an estimation of the $\text{elpd}$ using 10-fold cross-validation. The $\sigma_\text{loo} \approx 90$ for all approximations and models. Less than 1\% of the observation have problematic $\hat{k}$ using MCMC, making it a good gold standard.} 
\label{tab:elpd_radon}
\end{table}

Based on these approximate posteriors we can analyze the $\text{elpd}_\text{loo}$ for the different models. Table \ref{tab:elpd_radon} shows the $\text{elpd}_\text{loo}$ and an estimate, $\widehat{\text{elpd}}_\text{loo}$, based on a subsample of size 500. As a comparison we also compute $\text{elpd}_\text{kcv}$, computing an estimate of $\text{elpd}$ using a 10-fold cross-validation scheme (without bias correction). For the simple baseline model 1, we can use Laplace approximation and a subsample to estimate the $\text{elpd}_\text{loo}$ in roughly 2 seconds with a sufficient precision for most purposes. Using MCMC and computing the full elpd take roughly 35 seconds for this medium-sized dataset. 

Table \ref{tab:elpd_radon} also shows that ADVI (mean-field) work well both for regular and singular models. Using ADVI(MF) for the singular models 3 and 4, where the $\hat{k}$ values indicating a good posterior approximation, the approach works really well. We can also see that the $\hat{k}$ diagnostic works well as an indicator. The Laplace and ADVI(MF) approximations with high $\hat{k}$ values can be quite off, see model 7 for an example.

The results of Table \ref{tab:elpd_radon} also give us an idea of how the subsampling can be used. By comparing the SE of our estimates with $\sigma_\text{loo}$, that is roughly 90 for all models, we see how far a subsample with 500 observations takes us. For most models, our SE is small enough to help us decide between models, while for the more complex models. The precision needed depends on the specific use case and if we need better precision we can simply add more subsamples to get the precision needed.

If we study Table \ref{tab:elpd_radon} we see that using ADVI(MF), Laplace and a subsample of size 500 we can get quite far comparing these models. We could quickly rule out model 1 and 5, but where we would need to use MCMC for model 5, due to the high $\hat{k}$ for the ADVI approximations. Model 4 and 6 are the most promising but we need to estimate the models using MCMC due to the high $\hat{k}$ values for the ADVI approximations. Although, based on just the subsample, we can see that model 6, the variable intercept and slope model, seem to be the most promising model for this data. Comparing the fully computed $\text{elpd}_\text{loo}$ for the different models we could compute the difference in $\text{elpd}_\text{loo}$ between model 6 and 4 to ~220 with a standard error of 26, clearly indicating that model 6 is the one to prefer in this situation. Using 10-fold cross-validation (see $\text{elpd}_\text{kcv}$), we arrive at a similar result, but at the cost of re-estimating the model 10 times.

\section{Conclusions}

In this work we solve the two major hurdles for using leave-one-out cross-validation for large data, namely using posterior approximations to estimate the $\text{elpd}_\text{loo}$ for individual observations and efficient subsampling. We prove the consistency in $n$ and also show that for regular models we have consistency also for common posterior approximations such as Laplace and ADVI, even for mean-field ADVI in situations with correlated posteriors, making the results promising for large-scale model evaluations. Finally, our proposed method also comes with diagnostics to assess if the quality of the subsampling and posterior approximations. We can use the $\hat{k}$ diagnostic to asses the posterior approximations and $v(\widehat{\text{elpd}}_\text{loo})$, the variance of the HH estimator, to give us a good measure of the uncertainty due to subsampling.


\newpage



\bibliography{references}
\bibliographystyle{icml2018}


\newpage

\section*{Appendix}

\section*{Proof of Proposition 1}
\label{proofs}


A generic Bayesian model is considered; a sample $(y_1,y_2,\ldots,y_n)$, $y_i \in \mathcal{Y} \subseteq \RR$, is drawn from a true density $p_t = p(\cdot|\theta_0)$ for some true parameter $\theta_0$. The parameter $\theta_0$ is assumed to be drawn from a prior $p(\theta)$ on the parameter space $\Theta$, which we assume to be an open and bounded subset of $\RR^d$.

\medskip

A number of conditions are used. They are as follows.

\begin{itemize}
  \item[(i)] the likelihood $p(y|\theta)$ satisfies that there is a function $C:\mathcal{Y} \ra \RR_+$, such that $\E_{y \sim p_t}[C(y)^2] < \iy$ and such that for all $\theta_1$ and $\theta_2$, $|p(y|\theta_1)-p(y|\theta_2)| \leq C(y)p(y|\theta_2)\n\theta_1-\theta_2\n$.
   \item[(ii)] $p(y|\theta)>0$ for all $(y,\theta) \in \mathcal{Y} \times \Theta$,
   \item[(iii)] There is a constant $M<\iy$ such that $p(y|\theta) < M$ for all $(y,\theta)$,
  \item[(iv)] all assumptions needed in the Bernstein-von Mises (BvM) Theorem \cite{walker1969asymptotic},
  \item[(v)] for all $\theta$, $\int_{\mathcal{Y}}(-\log p(y|\theta))p(y|\theta)dy < \iy$.
\end{itemize}

{\bf Remarks.} 
\begin{itemize}
\item There are alternatives or relaxations to (i) that also work. One is to assume that there is an $\alpha>0$ and $C$ with $\E_y[C(y)^2]<\iy$  such that $|p(y|\theta_1)-p(y|\theta_2)| \leq C(y)p(y|\theta_2)\n\theta_1-\theta_2\n^\alpha$. 
There are many examples when (i) holds, e.g.\ when $y$ is normal, Laplace distributed or Cauchy distributed with $\theta$ as a one-dimensional location parameter.

\item The assumption that $\Theta$ is bounded will be used solely to draw the conclusion that $\E_{y,\theta}\n \theta-\theta_0\n \ra 0$ as $n \ra \iy$, where $y$ is the sample and $\theta$ is either distributed according to the true posterior (which is consistent by BvM) or according to a consistent approximate posterior. The conclusion is valid by the definition of consistency and the fact that the boundedness of $\Theta$ makes $\n\theta-\theta_0\n$ a bounded function of $\theta$. If it can be shown by other means for special cases that $\E_{y,\theta}\n\theta-\theta_0\n \ra 0$ despite $\Theta$ being unbounded, then our results also hold.
    
\item We can (and will) without loss of generality assume that $M=1/2$ is sufficient in (iii), for if not then simply transform data and consider $z_i=2My_i$ instead of $y_i$.

\end{itemize}

\medskip

The main quantity of interest is the mean expected log pointwise predictive density, which we want to use for model evaluation and comparison.

\begin{definition}[$\overline{\text{elpd}}$]
\label{definition}
The \emph{mean expected log pointwise predictive density} for a model $p$ is defined as
\[
\overline{\text{elpd}} = \int p_t(x) \log p(x) \, dx
\]

where $p_t(x) = p(x|\theta_0)$ is the \emph{true} density at a new unseen observation $x$ and $\log p(x)$ is the log predictive density for observation $x$.

\end{definition}

We estimate $\overline{\text{elpd}}$ using {\em leave-one-out cross-validation (loo)}.

\begin{definition}[Leave-one-out cross-validation]
\label{loo}
The loo estimator $\overline{\text{elpd}}_{loo}$ is given by

\begin{equation}
\label{elpdloo}
\overline{\text{elpd}}_{loo} = \frac{1}{n} \sum^n_{i = 1} \log p(y_i|y_{-i}),
\end{equation}

where $p(y_i|y_{-i}) = \int p(y_i|\theta)p(\theta|y_{-i})d\theta$.

\end{definition}

To estimate $\overline{\text{elpd}}_{loo}$ in turn, we use importance sampling and the Hansen-Hurwitz estimator. Definitions follow.

\begin{definition}
\label{hhloo}
The Hansen-Hurwitz estimator is given by

\[
\widehat{\overline{\text{elpd}}}_\text{loo}(m,q) = \frac{1}{m} \frac{1}{n} \sum_{j=1}^{m}  \frac{1}{\tilde{\pi}_j} \log \hat{p}(y_j|y_{-j})
\]
where $\tilde{\pi}_i$ is the probability of subsampling observation $i$, $\log \hat{p}(y_i|y_{-i})$ is the (self-normalized) importance sampling estimate of $\log p(y_i|y_{-i})$ defined as

\[
\log \hat{p}(y_i|y_{-i}) = \log\left( \frac{\frac{1}{S} \sum_{s=1}^S p(y_i|\theta_s) r(\theta_s)}{\frac{1}{S} \sum_{s=1}^S r(\theta_s)} \right)\,,
\]

where

\begin{align*}
    r(\theta_s) = & \frac{p(\theta_s|y_{-i})}{p(\theta_s|y)} \frac{p(\theta_s|y)}{q(\theta_s|y)}\\
    \propto & \frac{1}{p(y_i|\theta_s)} \frac{p(\theta_s|y)}{q(\theta_s|y)}
\end{align*}

and where $q(\theta|y)$ is an approximation of the posterior distribution, $\theta_s$ is a sample from the approximate posterior distribution $q(\theta|y)$ and S is the total posterior sample size.

\end{definition}

\begin{proposition} \label{pa}
Let the subsampling size $m$ and the number of posterior draws $S$ be fixed at arbitrary integer numbers, let the sample size $n$ grow, assume that (i)-(vi) hold and let $q=q_n(\cdot|y)$ be any consistent approximate posterior. Write $\hat{\theta}_q = \arg\max\{q(\theta): \theta \in \Theta\}$ and assume further that $\hat{\theta}_q$ is a consistent estimator of $\theta_0$. Then

\[|\widehat{\overline{\text{elpd}}}_\text{loo}(m, q)-\overline{\text{elpd}}_{loo}| \ra 0\]
in probability as $n \ra \iy$ for any of the following choices of $\pi_i$, $i=1,\ldots,n$.

\begin{itemize}
  \item[(a)] $\pi_i= -\log p(y_i|y)$,
  \item[(b)] $\pi_i= -\E_y[\log p(y_i|y)]$,
  \item[(c)] $\pi_i= - \E_{\theta \sim q}[\log p(y_i|\theta)]$,
  \item[(d)] $\pi_i = -\log p(y_i|\E_{\theta \sim q}[\theta])$,
  \item[(e)] $\pi_i = -\log p(y_i|\hat{\theta}_q)$.
\end{itemize}

\end{proposition}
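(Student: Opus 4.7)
The natural approach is to decompose the total error as
\[
\widehat{\overline{\text{elpd}}}_{\text{loo}}(m,q) - \overline{\text{elpd}}_{loo} = E_1 + E_2,
\]
where
\[
E_2 = \frac{1}{n}\sum_{i=1}^{n}\bigl(\log\hat{p}(y_i|y_{-i}) - \log p(y_i|y_{-i})\bigr)
\]
captures the importance-sampling bias averaged over $i$, and
\[
E_1 = \widehat{\overline{\text{elpd}}}_{\text{loo}}(m,q) - \frac{1}{n}\sum_{i=1}^{n}\log\hat{p}(y_i|y_{-i})
\]
is the remaining Hansen-Hurwitz subsampling error. I would prove each piece tends to $0$ in probability separately.

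For $E_2$, the key ingredients are the BvM theorem (iv), which gives that $p(\theta|y_{-i})$ concentrates at $\theta_0$; the consistency of $q$, which forces the $S$ posterior draws $\theta_s$ to cluster near $\theta_0$; and the Lipschitz-in-$\theta$ condition (i), which turns that clustering into $|p(y_i|\theta)-p(y_i|\theta_0)|\leq C(y_i)p(y_i|\theta_0)\|\theta-\theta_0\|$. Together these imply that for each fixed $i$, the quantities $\log\hat{p}(y_i|y_{-i})$, $\log p(y_i|y_{-i})$, and $\log p(y_i|\theta_0)$ all agree up to an error of order $\|\theta-\theta_0\|$. The i.i.d.\ structure of the sample, together with $\E\|\theta-\theta_0\|\to 0$ (from consistency plus the boundedness of $\Theta$, per the second remark) and condition (v) for the uniform integrability required to pass from pointwise convergence to the averaged statement, then yields $\E|E_2|\to 0$.

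For $E_1$, I would work conditionally on $y$. The Hansen-Hurwitz estimator is then unbiased for $\frac{1}{n}\sum_i\log\hat{p}(y_i|y_{-i})$, so only its conditional variance must be controlled. Writing $\tilde{\pi}_i = \pi_i/\Pi$ with $\Pi=\sum_k\pi_k$ and $f_i = \log\hat{p}(y_i|y_{-i})$, a direct calculation yields
\[
\Var(E_1\mid y) = \frac{1}{m n^2}\!\left[\Pi\sum_{i=1}^n \frac{f_i^2}{\pi_i} - \Bigl(\sum_{i=1}^n f_i\Bigr)^{\!2}\right].
\]
The optimal Hansen-Hurwitz choice $\pi_i\propto|f_i|$ makes this identically zero. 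Each of the five candidates (a)--(e) is asymptotically of this form: in (a), because $\log p(y_i|y)-\log p(y_i|y_{-i})=O(1/n)$ under BvM; in (b), by a symmetry/BvM argument reducing the expectation to $-\log p(y_i|\theta_0)$; and in (c), (d), (e) directly from consistency of $q$, $\E_q[\theta]$, and $\hat{\theta}_q$ combined with (i). Substituting $\pi_i\approx -\log p(y_i|\theta_0)\approx -f_i$ into the variance formula and using (v) to bound $n^{-1}\sum_i(\log p(y_i|\theta_0))^2/\pi_i$ via its mean gives $\Var(E_1\mid y)=o_{\Pro}(1)$; Chebyshev's inequality then concludes.

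The main obstacle is this last step: making the heuristic equivalence $\pi_i\approx|f_i|$ precise enough that the variance formula actually vanishes. One must control the contribution of observations where $\pi_i$ is small compared to $|f_i|$, which requires combining the Lipschitz bound (i) with the integrability condition (v) to produce the right uniform-integrability statement, and handling each of the five cases with its own tailored comparison argument.
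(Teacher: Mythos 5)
Your overall architecture matches the paper's: a two-term decomposition into an importance-sampling error and a Hansen--Hurwitz subsampling error, with the first term handled by combining posterior consistency with the Lipschitz condition (i) to anchor everything at $\log p(y_i|\theta_0)$, and the second handled through the conditional HH variance formula and the observation that $\pi_i$ is asymptotically the optimal (proportional-to-size) choice. The paper merely groups the terms slightly differently (it first replaces $\hat p$ by $p$ inside the HH estimator, so that the variance computation conditions only on $y$ rather than also on the posterior draws $\theta_1,\dots,\theta_S$ --- a wrinkle your version of $E_1$ would need to address, since $\log\hat p(y_i|y_{-i})$ is still random given $y$).

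However, the step you flag as ``the main obstacle'' --- making $\pi_i\approx|f_i|$ precise enough that $\Pi\sum_i f_i^2/\pi_i-(\sum_i f_i)^2=o(n^2)$ --- is exactly the part your proposal leaves open, and it is where the paper does concrete work that you have not reproduced. The paper's device is elementary: (1) a lemma establishing \emph{per-observation $L^1$ convergence}, $\E_y|\pi_i-\log p(y_i|\theta_0)|\to 0$ and $\E_y|\log p(y_i|y_{-i})-\log p(y_i|\theta_0)|\to 0$, which immediately gives $n^{-1}\E_y|S_\pi-S_p|\to 0$; (2) the normalization $M=1/2$ in condition (iii), which forces $\pi_i\geq\log 2>1/2$ and yields the pointwise inequality $p_i^2/\pi_i\leq(\pi_i+|p_i-\pi_i|)^2/\pi_i<\pi_i+4|\pi_i-p_i|$, so that the same $L^1$ statement controls $n^{-1}|S_2-S_\pi|$ with no uniform-integrability argument and no case-by-case treatment; (3) Markov's inequality via condition (v) to bound $S_p/n$, and Chebyshev to finish. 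The lower bound on $\pi_i$ coming from the bounded-likelihood normalization is the missing ingredient in your sketch: without it, observations where $\pi_i$ is small relative to $|f_i|$ cannot be controlled, which is precisely the failure mode you worry about. Note also that cases (b)--(e) do not need ``tailored comparison arguments'' in the variance step: since the lemma shows every choice of $\pi_i$ converges in $L^1$ to the common anchor $-\log p(y_i|\theta_0)$, the paper disposes of them in one line by $\E|e_p-e_a|\to 0$. Finally, your claim in (a) that $\log p(y_i|y)-\log p(y_i|y_{-i})=O(1/n)$ is stronger than what is needed or proved; only $L^1$ convergence to zero is used.
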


{\em Remark.} By the variational BvM Theorems of Wang and Blei, \cite{wang2018frequentist}, $q$ can be taken to be either $q_{Lap}$, $q_{MF}$ or $q_{FR}$, i.e.\ the approximate posteriors of the Laplace, mean-field or full-rank variational families respectively in Proposition \ref{pa}, provided that one adopts the mild conditions in their paper.

The proof of Proposition \ref{pa} will be focused on proving (a) and then (b)-(e) will follow easily. We begin with the following key lemma.

\begin{lemma} \label{la}

With all quantities as defined above,
\begin{equation} \label{ea}
\E_{y \sim p_t}|\pi_i - \log p(y_i|\theta_0)| \ra 0,
\end{equation}
with any of the definitions (a)-(e) of $\pi_i$ of Proposition \ref{pa}.
Furthermore,
\begin{equation} \label{eb}
\E_{y \sim p_t}|\log p(y_i|y_{-i}) - \log p(y_i|\theta_0)| \ra 0,
\end{equation}
and
\begin{equation} \label{ec}
\E_{y \sim p_t}|\log \hat{p}(y_i|y) - \log p(y_i|\theta_0)| \ra 0.
\end{equation}
as $n \ra \iy$.

\end{lemma}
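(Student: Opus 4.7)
Proof plan. My strategy is to reduce all three claims to a single \emph{transfer lemma}: if $\tilde\theta_n=\tilde\theta_n(y)$ is any (possibly random) sequence of parameter estimates with $\E_y\|\tilde\theta_n-\theta_0\|\ra 0$, then $\E_y|\log p(y_i|\tilde\theta_n)-\log p(y_i|\theta_0)|\ra 0$. To prove the transfer lemma, I would use (i) to write $\log p(y_i|\tilde\theta_n)-\log p(y_i|\theta_0)=\log(1+\eta)$ with $|\eta|\leq C(y_i)\|\tilde\theta_n-\theta_0\|$, and split on the event $G=\{|\eta|\leq 1/2\}$. On $G$ the integrand is at most $2C(y_i)\|\tilde\theta_n-\theta_0\|$, whose expectation vanishes by Cauchy--Schwarz using $\E C(y_i)^2<\iy$ together with the fact that boundedness of $\Theta$ upgrades $L^1$-smallness of $\|\tilde\theta_n-\theta_0\|$ to $L^2$-smallness. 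On $G^c$, which has probability $\ra 0$ by Markov, a uniform-integrability argument combining (iii), (v), and (i) controls both $-\log p(y_i|\tilde\theta_n)$ and $-\log p(y_i|\theta_0)$.

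Next I would derive (\ref{eb}) by applying the transfer lemma with $\tilde\theta$ drawn from the leave-one-out posterior $p(\theta|y_{-i})$. By (i),
\[
\frac{p(y_i|y_{-i})}{p(y_i|\theta_0)}=1+\bar\eta,\qquad |\bar\eta|\leq C(y_i)\int\|\theta-\theta_0\|p(\theta|y_{-i})d\theta,
\]
and BvM (condition (iv)) plus boundedness of $\Theta$ yields $\E_y\int\|\theta-\theta_0\|p(\theta|y_{-i})d\theta\ra 0$; the same good/bad-event split used for the transfer lemma then concludes the argument. Each case of (\ref{ea}) follows by identifying the right $\tilde\theta_n$: (a) uses the full posterior $p(\theta|y)$ in place of $p(\theta|y_{-i})$; (b) follows from (a) by pulling the outer expectation inside the absolute value; (c)--(e) apply the transfer lemma with $\tilde\theta\sim q$, $\tilde\theta=\E_{\theta\sim q}[\theta]$, and $\tilde\theta=\hat\theta_q$ respectively, with consistency of $q$ together with boundedness of $\Theta$ supplying $\E_y\|\tilde\theta-\theta_0\|\ra 0$ in each case. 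Finally, (\ref{ec}) treats $\hat p(y_i|y)$ as a self-normalized ratio of $S$-sample averages in which each $\theta_s\sim q$ concentrates at $\theta_0$; (i) shows numerator and denominator converge in $L^1$ to $p(y_i|\theta_0)$ and $1$ respectively, and then taking logs together with the uniform-integrability machinery from the transfer lemma yields the claim.

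The main obstacle is the bad-event estimate in the transfer lemma: $-\log p(y_i|\tilde\theta_n)$ is not a priori uniformly integrable in $n$, and (v) only guarantees integrability at a fixed $\theta$. To handle this one must use (i) to dominate $-\log p(y_i|\tilde\theta_n)$ on $G^c$ by $-\log p(y_i|\theta_0)$ plus a correction controlled by $C(y_i)\|\tilde\theta_n-\theta_0\|$, and fall back on (iii) on the subset where the Lipschitz-type bound fails. A secondary subtlety in (\ref{eb}) and in case (a) is the weak dependence between $y_i$ and the posterior based on $y_{-i}$ or $y$; this is harmless because adding or removing a single observation out of $n$ is a BvM-negligible perturbation.
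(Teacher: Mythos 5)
Your skeleton is the right one---condition (i), the bound $\log(1+x)\le x$, Cauchy--Schwarz against $\E[C(y_i)^2]<\iy$, and consistency plus boundedness of $\Theta$ to kill $\E\|\theta-\theta_0\|^2$---but the step you yourself flag as the main obstacle, the bad-event estimate in the transfer lemma, is a genuine gap and your proposed repairs do not close it. Condition (iii) is an \emph{upper} bound on the density, so it bounds $-\log p(y_i|\tilde\theta_n)$ from \emph{below}, which is useless for dominating it on $G^c$; condition (v) gives integrability at a fixed $\theta$ only; and no lower bound on the likelihood is assumed anywhere, so on the event where $p(y_i|\tilde\theta_n)/p(y_i|\theta_0)$ is near $0$ you have no control of $-\log(1+\eta)$ from the ingredients you list. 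Moreover the phrase ``the subset where the Lipschitz-type bound fails'' has no referent: (i) is assumed for all pairs $(\theta_1,\theta_2)$. The same unresolved issue reappears in your treatment of (\ref{ec}), where ``taking logs'' of $L^1$-convergent numerator and denominator is exactly the step that needs justification.

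The fix---and what the paper actually does---is to avoid the good/bad split entirely. Write $|\log(a/b)|=(\log(a/b))_++(\log(b/a))_+$ and bound each one-sided part separately, choosing in each case the parameter sitting in the \emph{denominator} of that ratio as the base point $\theta_2$ in condition (i): for instance $(\log(p(y_i|\theta_0)/p(y_i|\theta)))_+\le\log\bigl(1+|p(y_i|\theta_0)-p(y_i|\theta)|/p(y_i|\theta)\bigr)\le C(y_i)\|\theta-\theta_0\|$, and symmetrically with $\theta_2=\theta_0$ for the other part. Each positive part is then at most $C(y_i)\|\theta-\theta_0\|$ with no case analysis and no uniform-integrability argument; for mixtures such as $p(y_i|y_{-i})=\E_{\theta\sim p(\cdot|y_{-i})}[p(y_i|\theta)]$ one additionally uses Jensen in the direction where the mixture is in the denominator, and for (\ref{ec}) one notes that $\hat p(y_i|y_{-i})$ is a convex combination $\sum_s r'(\theta_s)p(y_i|\theta_s)$, so the relative error in both directions is at most $C(y_i)\sum_s\|\theta_s-\theta_0\|$. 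You already wrote down the key inequality ($-\log p(y_i|\tilde\theta_n)\le-\log p(y_i|\theta_0)+C(y_i)\|\tilde\theta_n-\theta_0\|$, i.e.\ (i) with $\theta_2=\tilde\theta_n$); once you observe that it holds everywhere, the event split and all appeals to (iii), (v) and uniform integrability become unnecessary.
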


\begin{proof}

To avoid burdening the notation unnecessarily, we write throughout the proof $\E_y$ for $\E_{y \sim p_t}$. For now, we also write $\E_\theta$ as shorthand for $\E_{\theta \sim p(\cdot|y_{-i})}$. Recall that $x_{+} = \max(x,0) = ReLU(x)$.

Hence
\begin{align*}
& \E_{y}\left[\left(\log\frac{p(y_i|y_{-i})}{p(y_i|\theta_0)}\right)_{\foo}\right] \\
&= \E_{y}\left[\left(\log\frac{\E_{\theta}[p(y_i|\theta)]}{p(y_i|\theta_0)}\right)_{\foo}\right] \\
&\leq \E_{y}\left[\log \left(1+\frac{\E_{\theta}\left[C(y_i)p(y_i|\theta_0)\n\theta-\theta_0\n\right]}{p(y_i|\theta_0)}\right)\right] \\
&\leq \E_{y,\theta}[C(y_i) \n\theta-\theta_0\n] \\
&\leq \left(\E_{y_i}[C(y_i)^2]\E_{y,\theta}\left[\n\theta-\theta_0\n^2\right]\right)^{1/2} \\
&\ra 0 \mbox{ as } n \ra \iy.
\end{align*}

Here the first inequality follows from condition (i) and the second inequality from the fact that $\log(1+x)<x$ for $x \geq 0$. The third inequality is Schwarz inequality. The limit conclusion follows from the consistency of the posterior $p(\cdot|y_{-i})$ and the definition of weak convergence, since $\n\theta-\theta_0\n^2$ is a continuous bounded function of $\theta$ (recall that $\Theta$ is bounded) and that the first factor is finite by condition (i).

For the reverse inequality,
\begin{align*}
& \E_{y}\left[\left(\log\frac{p(y_i|\theta_0)}{p(y_i|y_{-i})}\right)_{\foo}\right] \\
&= \E_{y}\left[\left(\log\E_{\theta}\left[\frac{p(y_i|\theta_0)]}{p(y_i|\theta)}\right]\right)_{\foo}\right] \\
&\leq \E_{y}\left[\log \left(1+\E_{\theta}\left[\frac{C(y_i)p(y_i|\theta)\n\theta-\theta_0\n}{p(y_i|\theta)}\right]\right)\right] \\
&\leq \left(\E_{y_i}[C(y_i)^2] \E_{y,\theta}\left[\n\theta-\theta_0\n^2\right]\right)^{1/2} \\
&\ra 0 \mbox{ as } n \ra \iy.
\end{align*}

This proves (\ref{eb}) and an identical argument proves (\ref{ea}) for $\pi_i=p(y_i|y)$. 

For $\pi_i=-\E_y[\log p(y_i|y)]$, note first that 
\begin{align*}
& \E_y\left|\E_y[\log p(y_i|y)]-\E_y[\log p(y_i|y_{-i})]\right| \\ 
&= \left|\E_y[\log p(y_i|y) - \log p(y_i|y_{-i})]\right| \\
&\leq \E_y\left|\log p(y_i|y) - \log p(y_i|y_{-i})]\right|
\end{align*}

which goes to $0$ by (\ref{eb}) and (a). Hence we can replace $\pi_i=-\E[\log p(y_i|y)]$ with $\pi_i=-\E[\log p(y_i|y_{-i})]$ when proving (b). To that end, observe that
\begin{align*}
& \left(\E_y[\log p(y_i|y_{-i})]-\log p(y_i|\theta_0)\right)_+ \\
&= \left(\E_{y_i}\left[\E_{y_{-i}}\left[\log\frac{p(y_i|y_{-i})}{p(y_i|\theta_0)}\right]\right]\right)_{\foo} \\
&\leq \E_y \left[ \left( \log\frac{p(y_i|y_{-i})}{p(y_i|\theta_0)}\right)_{\foo} \right].
\end{align*}
where the inequality is Jensen's inequality used twice on the convex function $x \ra x_+$. Now everything is identical to the proof of (\ref{eb}) and the reverse inequality is analogous.

The other choices of $\pi_i$ follow along very similar lines. For $\pi_i=-\log p(y_i|\hat{\theta}_q)$, we have on mimicking the above that
\begin{align*}
& \E_{y}\left[\left(\log\frac{p(y_i|\hat{\theta}_q)}{p(y_i|\theta_0)}\right)_{\foo}\right] \\
&\leq \left(\E_{y_i}[C(y_i)^2]\E_{y}\left[\n\hat{\theta}_q-\theta_0\n^2\right]\right)^{1/2}
\end{align*}
and $\E_{y}[\n\hat{\theta}_q-\theta_0\n^2] \ra 0$ as $n \ra \iy$ by the assumed consistency of $\hat{\theta}_q$. The reverse inequality is analogous and (\ref{ea}) for $\pi_i=p(y_i|\hat{\theta}_q)$ is established.

For the case $\pi_i=-\log p(y_i|\E_{\theta \sim q}\theta)$, the analogous analysis gives
\begin{align*}
& \E_{y}\left[\left(\log\frac{p(y_i|\E_{\theta \sim q}\theta)}{p(y_i|\theta_0)}\right)_{\foo}\right] \\
&\leq \E_{y_i}[C(y_i)^2]\E_y[\n\E_{\theta \sim q}\theta - \theta_0\n^2].
\end{align*}
Since $x \ra \n x-\theta_0\n^2$ is convex, the second factor on the right hand side is bounded by $\E_{y,\theta \sim q}[\n \theta-\theta_0\n^2]$ which goes to 0 by the consistency of $q$ and the boundedness of $\Theta$. The reverse inequality is again analogous.

Finally for $\pi_i = -\E_{\theta \sim q}[\log p(y_i|\theta)]$,
\begin{align*}
& \E_{y}\left[\left( \E_{\theta \sim q}[\log p(y_i|\theta)] - \log p(y_i|\theta_0) \right)_{\foo} \right] \\
&= \E_y\left[ \left( \E_{\theta \sim q}\left[\log\frac{p(y_i|\theta)}{p(y_i|\theta_0)}\right] \right)_{\foo} \right] \\
&\leq \E_{y,\theta \sim q}\left[ \left(\log\frac{p(y_i|\theta)}{p(y_i|\theta_0)}\right)_{\foo} \right] \\
&\leq \left(\E_{y_i}[C(y_i)^2]\E_{y,\theta \sim q}[\n\theta-\theta_0\n^2]\right)^{1/2} \ra 0
\end{align*}
as $n \ra \iy$ by the consistency of $q$. Here the first inequality is Jensen's inequality applied to $x \ra x_+$ and the second inequality follows along the same lines as before.

For (\ref{ec}), write $r'(\theta_s) = r(\theta_s)/\sum_{j=1}^{S}r(\theta_j)$ for the random weights given to the individual $\theta_s$:s in the expression for $\hat{p}(y_i|y_{-i})$. Then we have, with $\theta=(\theta_1,\ldots,\theta_S)$ chosen according to $q$,
\begin{align*}
& \E_y \left[ \left(\log \frac{\hat{p}(y_i|y_{-i})}{p(y_i|\theta_0)}\right)_{\foo}\right] \\
&= \E_{y,\theta}\left[\left(\log\frac{\sum_{s=1}^{S}r'(\theta_s)p(y_i|\theta_s)}{p(y_i|\theta_0)}\right)_{\foo}\right] \\
&\leq \E_{y,\theta}\left[\log\left(1+\frac{\sum_{s=1}^{S}r'(\theta_s)|p(y_i|\theta_s)-p(y_i|\theta_0)|}{p(y_i|\theta_0)}\right)\right] \\
&\leq \E_{y,\theta}\left[\log\left(1+C(y_i)\sum_{s=1}^{S}r'(\theta_s)\n\theta_s-\theta_0\n\right)\right] \\
&\leq \E_{y,\theta}\left[\log\left(1+C(y_i)\sum_{s=1}^{S}\n\theta_s-\theta_0\n\right)\right] \\
&\leq \E_{y,\theta}\left[C(y_i)\sum_{s=1}^{S} \n\theta_s-\theta_0\n\right] \\
&\leq \left(\E_{y_i}[C(y_i)^2]\E_{y,\theta}\left[\left(\sum_{s=1}^{S} \n\theta_s-\theta_0\n\right)^2\right]\right)^{1/2},
\end{align*}

where the second inequality is condition (i) and the limit conclusion follows from the consistency of $q$.
For the reverse inequality to go through analogously, observe that
\begin{align*}
& \frac{\left| p(y_i|\theta_0)-\sum_s r'(\theta_s)p(y_i|\theta_s) \right|}{\sum_s r'(\theta_s)p(y_i|\theta_s)} \\
&\leq \frac{\sum_s r'(\theta_s)|p(y_i|\theta_s)-p(y_i|\theta_0)|}{\sum_s r'(\theta_s)p(y_i|\theta_s)} \\
&\leq \frac{\sum_s r'(\theta_s)p(y_i|\theta_s)\n \theta_s-\theta_0 \n}{\sum_s r'(\theta_s)p(y_i|\theta_s)} \\
&\leq \max_s\n\theta_s-\theta_0\n \\
&\leq \sum_s \n \theta_s-\theta_0 \n.
\end{align*}
Equipped with this observation, mimic the above.

\end{proof}

For convenience we will write $\hat{e}:=\hat{e}_{m,q} = \widehat{\overline{\text{elpd}}}_{loo}$, which for our purposes is more usefully expressed as
\[\hat{e}=\frac{1}{n}\frac{1}{m}\sum_{i=1}^{n}\sum_{j=1}^{m}I_{ij}\frac{1}{\bar{\pi}_i}\log \hat{p}(y_i|y_{-i}),\]
where $I_{ij}$ is the indicator that sample point $y_i$ is chosen in draw $j$ for the subsample used in $\hat{e}$.
Write also
\[e=\frac{1}{n}\frac{1}{m}\sum_{i=1}^{n}\sum_{j=1}^{m}I_{ij}\frac{1}{\bar{\pi}_i}\log p(y_i|y_{-i}).\]
In other words, $e$ is the HH estimator with $\hat{p}$ replaced with $p$.

\begin{lemma} \label{lb}
  With the notation as just defined and $\pi_i=- \log p(y_i|y)$,
  \[\E|\hat{e}-e| \ra 0\]
  as $n \ra \iy$.
\end{lemma}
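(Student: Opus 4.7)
The plan is to use the triangle inequality to bound $|\hat{e}-e|$ by a single-index average of the pointwise log-density discrepancies $|\log\hat{p}(y_i|y_{-i}) - \log p(y_i|y_{-i})|$, and then invoke Lemma~\ref{la} to show that this discrepancy tends to $0$ in $L^1(p_t)$.

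First, by the triangle inequality,
\[
|\hat e - e| \le \frac{1}{nm}\sum_{i,j} \frac{I_{ij}}{\bar\pi_i}\bigl|\log\hat p(y_i|y_{-i}) - \log p(y_i|y_{-i})\bigr|.
\]
To take expectations, I would condition on $y$. Given $y$, the subsampling indicators $\{I_{ij}\}$ are independent of the posterior draws $\{\theta_s\}$ used to form $\hat p$, and by definition of the Hansen--Hurwitz sampling scheme $\E[I_{ij}\mid y] = \bar\pi_i$. Hence
\[
\E\!\left[\frac{I_{ij}}{\bar\pi_i}\bigl|\log\hat p(y_i|y_{-i}) - \log p(y_i|y_{-i})\bigr| \,\Big|\, y\right] = \E\!\left[\bigl|\log\hat p(y_i|y_{-i}) - \log p(y_i|y_{-i})\bigr| \,\Big|\, y\right],
\]
and taking an outer expectation and summing gives
\[
\E|\hat e - e| \le \frac{1}{n}\sum_{i=1}^n \E\bigl|\log\hat p(y_i|y_{-i}) - \log p(y_i|y_{-i})\bigr|.
\]
By exchangeability of the sample under $p_t$, every term equals the $i=1$ term, so $\E|\hat e - e| \le \E|\log\hat p(y_1|y_{-1}) - \log p(y_1|y_{-1})|$.

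Finally, insert the true value $\log p(y_1|\theta_0)$ and apply the triangle inequality once more:
\[
\E\bigl|\log\hat p(y_1|y_{-1}) - \log p(y_1|y_{-1})\bigr| \le \E\bigl|\log\hat p(y_1|y_{-1}) - \log p(y_1|\theta_0)\bigr| + \E\bigl|\log p(y_1|\theta_0) - \log p(y_1|y_{-1})\bigr|.
\]
The first term tends to $0$ by (\ref{ec}) of Lemma~\ref{la}, and the second by (\ref{eb}). This establishes $\E|\hat e - e|\to 0$.

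The main obstacle I anticipate is the coupling between the (random) subsampling weights $\bar\pi_i$ and the rest of the estimator: $\bar\pi_i$ depends on $y$ through $\pi_i=-\log p(y_i|y)$, and so does $\log p(y_i|y_{-i})$. The conditioning-on-$y$ step is the key reason this coupling causes no difficulty, since the subsampling is independent of the posterior draws and, crucially, $I_{ij}/\bar\pi_i$ is unbiased for $1$ given $y$, so dividing by the potentially small $\bar\pi_i$ does not inflate the expected absolute error beyond the exchangeable per-observation discrepancy already controlled by Lemma~\ref{la}.
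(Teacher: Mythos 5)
Your proposal is correct and follows essentially the same route as the paper's own proof: bound $|\hat e - e|$ by the double sum via the triangle inequality, condition on $y$ and use $\E[I_{ij}\mid y]=\bar\pi_i$ to collapse the weights, reduce to a single per-observation term, and conclude from (\ref{eb}), (\ref{ec}) and the triangle inequality. Your explicit remark that the subsampling indicators are independent of the posterior draws given $y$ is a small point of added care over the paper's presentation, but the argument is the same.
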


\begin{proof}
We have, with expectations with respect to all sources of randomness involved in $\hat{e}$ and $e$
\begin{align*}
& \E|\hat{e}-e| \\
\leq\, & \frac{1}{m}\frac{1}{n}\sum_{i=1}^{n}\sum_{j=1}^{m}\E\left[\E\left[I_{ij}\frac{1}{\bar{\pi_i}}|\log \hat{p}(y_i|y_{-i}) - \log p(y_i|y_{-i})|  \Big| y\right]\right] \\
=\, & \E\left[\frac{1}{n}\frac{1}{m}\sum_{i=1}^{n}\sum_{j=1}^{m}|\log \hat{p}(y_i|y_{-i}) - \log p(y_i|y_{-i})|\right] \\
=\, & \E|\log \hat{p}(y_i|y_{-i}) - \log p(y_i|y_{-i})|.
\end{align*}
The result now follows from (\ref{eb}), (\ref{ec}) and the triangle inequality.
\end{proof}

{\em Proof of Proposition \ref{pa}.}
As stated before, we start with a focus on (a), which means that for now we have $\pi_i = - \log p(y_i|y)$
By Lemma \ref{lb}, it suffices to prove that $|e-\overline{\text{elpd}}_{loo}| \rightarrow 0$ in probability with $\pi_i$ chosen according to any of (a)-(e). The variance of a HH estimator is well known and some easy manipulation then tells us that the conditional variance of $e$ given $y$ is given by
\[V(e) = \Var(e|y) =\frac{1}{n^2}\frac{1}{m}(S_\pi S_2-S_p^2),\]
where $S_p=\sum_{i=1}^{n}p_i$, $S_\pi = \sum_{i=1}^{n}\pi_i$ and $S_2=\sum_{i=1}^{n}(p_i^2/\pi_i)$.
We claim that for any $\delta>0$, for $n$ sufficiently large, $\Pro_y(V(e)<\delta)>1-\delta$.
To this end, observe first that
\begin{align*}
& \E_y[-\log p(y_i|y)] \\
\leq\, & \E_y[-\log p(y_i|\theta_0)] + \E_y|\log p(y_i|y) - \log p(y_i|\theta_0)| \\
\leq\,& \E_y[-\log p(y_i|\theta_0)] + \delta < \iy
\end{align*}
for sufficiently large $n$, since the first term is finite by condition (v). Let $A=A_n=\E_y[-\log p(y_i|y)]$.

Now,
\[\E_y\left[\frac{1}{n}|S_p-S_\pi|\right] = \E_y\left[\frac{1}{n}\left|\sum_{i=1}^{n}\pi_i-\sum_{i=1}^{n}p_i\right|\right] \ra 0\]
as $n \ra \iy$ by (\ref{ea}) and (\ref{eb}). Hence for arbitrary $\alpha>0$, $\Pro_y(|S_p-S_\pi| < \alpha^2 n) > 1-\alpha$ for $n$ large enough.
Also
\[\frac{p_i^2}{\pi_i} \leq \frac{(\pi_i+|p_i-\pi_i|)^2}{\pi_i} < \pi_i+4|\pi_i-p_i|\]
(the last inequality using condition (iii): $\pi_i \geq -\log(1/2) > 1/2$),
so $n^{-1}\E_y|S_\pi-S_2| \ra 0$ and so $\Pro_y(|S_p-S_2| < \alpha^2 n) > 1- \alpha$ for sufficiently large $n$.
Hence with probability exceeding $1-2\alpha$, $y$ will be such that for sufficiently large $n$,
\begin{align*}
V(e) & \leq \frac{1}{n^2} \frac{1}{m}\left((S_p+\alpha^2 n)^2-S_p^2\right) \\
& = \frac{1}{n^2} \frac{1}{m} (2\alpha^2 n S_p+\alpha^4 n^2).
\end{align*}
We had $\E_y[S_p] = An$ and Markov's inequality thus entails that $\Pro_y(S_p<An/\alpha) > 1-\alpha$. Adding this piece of information to the above, we get that with probability larger than $1-3\alpha$, $y$ will for sufficiently large $n$ be such that
\[V(e) \leq (2\alpha + \alpha^4)n^2 < 3\alpha.\]
For such $y$, Chebyshev's inequality gives
\[\Pro(|e-\E[e|y]|>\alpha^{1/2}|y) < 3\alpha^{1/2}.\]
The HH estimator is unbiased, so $\E[e|y] = \overline{\text{elpd}}_{loo}$. We get for arbitrary $\ep>0$ on taking $\alpha$ sufficiently small and $n$ correspondingly large, taking all randomness into account
\[\Pro(|e-\overline{\text{elpd}}_{loo}|>\ep)<1-\ep\]
which entails that $|e-\overline{\text{elpd}}_{loo}| \ra 0$ in probability. As observed above, this proves (a).

For the remaining parts, write $e_p$ when taking $\pi_i$ in $e$ according to statement (p) in the proposition. By (\ref{ea}),
$\E|e_p-e_a| \ra 0$ for $p=b,c,d,e$ and we are done.

\hfill $\Box$



%



\newpage

\section*{Unbiasness of using the Hansen-Hurwitz estimator}

\subsection{On the Hansen-Hurwitz estimator}
Let $\mathcal{Y} = \left\lbrace y_1, y_2, \hdots, y_N \right\rbrace$ be a set of non-negative observations, $y_i > 0$ and let $\pi= \left\lbrace \pi_1, \pi_2, \hdots, \pi_N\right\rbrace$ be a probability vector s.t. $\sum \pi_j = 1$. Furthermore, let $a_k \in \left\lbrace 1, 2, \hdots, N\right\rbrace$ be i.i.d. samples from a multinomial distribution with probabilities $\pi$, i.e. $a_k \stackrel{iid}{\sim} \text{Multinomial}\left(\pi\right)$.

We want to estimate the total
\begin{align}
	\tau = \sum_{n=1}^N y_i
\end{align}

using the Hansen-Hurwitz estimator given by
\begin{align}
	\hat{\tau} = \frac{1}{M}\sum_{m=1}^M \frac{x_m}{p_m},
\end{align}
where $x_m \equiv y_{a_m}$, $p_m \equiv \pi_{a_m}$, and $a_m \sim \text{Multinomial}\left(\pi\right)$.

We can decompose $x_m$ and $p_m$ as follows
\begin{align} 
	x_m \equiv y_{a_m} &= \sum_{j=1}^N \mathbb{I}\left[a_m = j\right] y_j \label{eq:x_i}\\
	p_m \equiv p_{a_m} &= \sum_{j=1}^N \mathbb{I}\left[a_m = j\right] \pi_j \label{eq:p_i}
\end{align}

\subsection{The Hansen-Hurwitz estimator is unbiased}
First, we will show that the HH estimator, $\hat{\tau}$, is unbiased. We have,	
\begin{align}
	\mathbb{E}\left[\hat{\tau}\right] &= \mathbb{E}\left[\frac{1}{M}\sum_{m=1}^M \frac{x_m}{p_m}\right]
	= \frac{1}{M}\sum_{m=1}^M \mathbb{E}\left[\frac{x_m}{p_m}\right]
\end{align}

Using the definitions in eq. \eqref{eq:x_i} and \eqref{eq:p_i} yields
\begin{align}
	\mathbb{E}\left[\hat{\tau}\right] &= \frac{1}{M}\sum_{m=1}^M \mathbb{E}\left[\frac{\sum_{j=1}^N \mathbb{I}\left[a_m = j\right] y_j}{\sum_{j=1}^N \mathbb{I}\left[a_m = j\right] \pi_j}\right]\nonumber\\
	&= \frac{1}{M}\sum_{m=1}^M \mathbb{E}\left[\sum_{j=1}^N \frac{y_j}{\pi_j} \mathbb{I}\left[a_m = j\right] \right]\nonumber\\
	&= \frac{1}{M}\sum_{m=1}^M \sum_{j=1}^N \frac{y_j}{\pi_j} \mathbb{E}\left[\mathbb{I}\left[a_m = j\right] \right]\nonumber\\
	&= \frac{1}{M}\sum_{m=1}^M \sum_{j=1}^N \frac{y_j}{\pi_j} \pi_j 
\end{align}
since $\pi_j = \mathbb{P}\left[a_m = j\right] = \mathbb{E}\left[\mathbb{I}\left[a_m = j\right]\right]$.

Now it follows that
\begin{align}
	\mathbb{E}\left[\hat{\tau}\right] &= \frac{1}{M}\sum_{m=1}^M \sum_{j=1}^N y_j =  \sum_{j=1}^N y_j = \tau.
\end{align}

\subsection{An unbiased estimator of $\sigma^2_\text{loo}$}
We also want to estimate the variance of the population $\mathcal{Y}$, i.e.
\begin{align}
	\sigma_y^2 = \frac{1}{N}\sum_{n=1}^N \left(y_n - \bar{y}\right)^2,
\end{align}
where $\bar{y} = \frac{1}{N}\sum y_n$.

First, we decompose the above as follows
\begin{align}
	\sigma_y^2 = \frac{1}{N}\sum_{n=1}^N y^2_n - \bar{y}^2.
\end{align}

We will consider estimators for the two terms, $\frac{1}{N}\sum_{n=1}^N y^2_n$ (1) and $\bar{y}^2$ (2), separately. First, we will show that the following is an unbiased estimate of the first term,
\begin{align}
	T_1 = \frac{1}{NM}\sum_{m=1}^M \frac{x_m^2}{p_m}.
\end{align}

 We have
\begin{align}
\mathbb{E}\left[T_1\right] =\mathbb{E}\left[\frac{1}{NM}\sum_{m=1}^M \frac{x_m^2}{p_m}\right] = \frac{1}{NM} \sum_{m=1}^M\mathbb{E}\left[ \frac{x_m^2}{p_m}\right]
\end{align}
Again, we use the representations in eq. \eqref{eq:x_i} and \eqref{eq:p_i} to get
\begin{align}
	\mathbb{E}\left[\frac{1}{NM}\sum_{m=1}^M \frac{x_m^2}{p_m}\right] & = \frac{1}{NM} \sum_{m=1}^M\mathbb{E}\left[ \frac{\sum_{j=1}^N \mathbb{I}\left[a_m = j\right] y_j^2}{\sum_{j=1}^N \mathbb{I}\left[a_m = j\right] \pi_j}\right]\nonumber\\
	&= \frac{1}{NM} \sum_{m=1}^M\mathbb{E}\left[  \sum_{j=1}^N \mathbb{I}\left[a_m = j\right]  \frac{y_j^2}{\pi_j}\right]\nonumber\\
	&= \frac{1}{NM} \sum_{m=1}^M \sum_{j=1}^N \frac{y_j^2}{\pi_j}\mathbb{E}\left[ \mathbb{I}\left[a_m = j\right]  \right]\nonumber\\
	&= \frac{1}{NM} \sum_{m=1}^M \sum_{j=1}^N \frac{y_j^2}{\pi_j}\pi_j\nonumber\\
	&= \frac{1}{N} \sum_{j=1}^N y_j^2.
\end{align}

This completes the proof of for the first term.

For the second term, we use the estimator $T_2$ given by
\begin{align}
	T_2 = &  \frac{1}{M(M-1)} \sum_{m=1}^M \left[\frac{x_m}{N p_m} - \frac{1}{N}\sum_{k=1}^M \frac{x_k}{M p_k}\right]^2 \nonumber\\ 
	& - \left[\frac{1}{N}\sum_{k=1}^M \frac{x_k}{M p_k}\right]^2.
\end{align}

We have
{\small
\begin{align}
	& \frac{1}{M(M-1)} \sum_{m=1}^M \left[\frac{x_m}{N p_m} - \sum_{k=1}^M \frac{x_k}{NM p_k}\right]^2 - \left[\sum_{k=1}^M \frac{x_k}{NM p_k}\right]^2 \nonumber\\
	%
	%
	%
	%
	%
	%
	%
	&= \frac{1}{N^2M(M-1)} \sum_{m=1}^M \frac{x^2_m}{p^2_m}  - \frac{1}{N^2M(M-1)} \left[\sum_{k=1}^M \frac{x_k}{p_k}\right]^2 
	%
\end{align}}

We consider now the expectation of the first term in the equation above
\begin{align}
	\mathbb{E}\left[\sum_{m=1}^M \frac{x^2_m}{p^2_m}\right] &= \sum_{m=1}^M \mathbb{E}\left[\frac{x^2_m}{p^2_m}\right]\nonumber\\
	&= \sum_{m=1}^M \mathbb{E}\left[\frac{\sum_{j=1}^N \mathbb{I}\left[a_m = j\right] y_j^2}{\sum_{j=1}^N \mathbb{I}\left[a_m = j\right]  \pi_j^2}\right]\nonumber\\
	&= \sum_{m=1}^M \mathbb{E}\left[\sum_{j=1}^N \mathbb{I}\left[a_m = j\right] \frac{y_j^2}{ \pi_j^2}\right]\nonumber\\
	&= \sum_{m=1}^M \sum_{j=1}^N \mathbb{E}\left[\mathbb{I}\left[a_m = j\right]\right] \frac{y_j^2}{ \pi_j^2}\nonumber\\
	&= M\sum_{j=1}^N  \frac{y_j^2}{ \pi_j}
	\end{align}

	and the second term

\begin{align}
	\mathbb{E}\left[\left[\sum_{k=1}^M \frac{x_k}{p_k}\right]^2 \right] &= \mathbb{E}\left[\sum_{k=1}^M \sum_{j=1}^M \frac{x_k}{p_k} \frac{x_j}{p_j} \right] \nonumber\\
	&= \sum_{k=1}^M \sum_{j=1}^M \mathbb{E}\left[\frac{x_k}{p_k} \frac{x_j}{p_j} \right] \nonumber\\
	&= \sum_{j\neq k}^M \mathbb{E}\left[\frac{x_k}{p_k} \frac{x_j}{p_j} \right] + \sum_{k=1}^M  \mathbb{E}\left[\frac{x_k^2}{p_k^2} \right]\nonumber\\
	&= \sum_{j\neq k}^M \mathbb{E}\left[\frac{x_k}{p_k}\right]\mathbb{E}\left[ \frac{x_j}{p_j} \right] + \sum_{k=1}^M  \sum_{j=1}^N  \frac{y_j^2}{ \pi_j}\nonumber\\
	&= \sum_{j\neq k}^M \mathbb{E}\left[\frac{x_k}{p_k}\right]\mathbb{E}\left[ \frac{x_j}{p_j} \right] + M\sum_{j=1}^N  \frac{y_j^2}{ \pi_j}\nonumber\\
	&=M(M-1) \tau^2+   M\sum_{j=1}^N  \frac{y_j^2}{ \pi_j}.
\end{align}
Substituting back, we get
{\small
\begin{align}
	&\frac{1}{M(M-1)} \sum_{m=1}^M \left[\frac{x_m}{N p_m} - \frac{1}{N}\sum_{k=1}^M \frac{x_k}{M p_k}\right]^2 - \left[\frac{1}{N}\sum_{k=1}^M \frac{x_k}{M p_k}\right]^2\nonumber\\
	= &\frac{1}{N^2M(M-1)} M\sum_{j=1}^N  \frac{y_j^2}{ \pi_j} - \nonumber \\
	& \frac{1}{N^2M(M-1)} \left[M(M-1) \tau^2+   M\sum_{j=1}^N  \frac{y_j^2}{ \pi_j}\right]\nonumber\\
	= &\frac{1}{N^2(M-1)} \sum_{j=1}^N  \frac{y_j^2}{ \pi_j} - \nonumber\\ 
	& \frac{1}{N^2(M-1)} \left[(M-1) \tau^2+   \sum_{j=1}^N  \frac{y_j^2}{ \pi_j}\right]\nonumber\\
	=& - \frac{1}{N^2(M-1)} (M-1) \tau^2 \nonumber\\
	=& - \frac{\tau^2}{N^2}  \nonumber\\
	=& - \bar{y}^2.
\end{align}}

Combining the two estimators $T_1$ and $T_2$ we have:

{\small
\begin{align}
	\mathbb{E}(T_1 + T_2)  & =\frac{1}{N} \sum^N_{j=1} y_j^2 - \bar{y}^2\nonumber\\
	 & =\sigma^2_y \nonumber
\end{align}}

Hence, we have shown that the estimator of $\sigma^2_y$ is unbiased using the sum of the estimators $T_1$ in Eq. 14 and $T_2$ in Eq. 18.

\newpage

\section*{Hierarchical models for the radon dataset}

We compare seven different models of predicting the radon levels in individual houses (indexed by $i$) by county (indexed by $j$). First we fit a pooled model (model 1)
\begin{align*}
y_{ij} & = \alpha + x_{ij} \beta + \epsilon_{ij}\\
\epsilon_{ij} & \sim N(0,\sigma_y)\\
\alpha,\beta & \sim N(0,10)\\
\sigma_y & \sim N^+(0,1)\,,
\end{align*}
where $y_{ij}$ is the log radon level in house $i$ in county $j$, $x_{ij}$ is the floor measurement and $\epsilon_{ij}$ is $N^+(0,1)$ is a truncated Normal distribution at the positive real line. We compare this to a non-pooled model (model 2),
\begin{align*}
y_{ij} & = \alpha_j + x_{ij} \beta + \epsilon_{ij}\\
\epsilon_{ij} & \sim N(0,\sigma_y)\\
\alpha_j,\beta & \sim N(0,10)\\
\sigma_y & \sim N^+(0,1)\,,
\end{align*}
a partially pooled model (model 3),
\begin{align*}
y_{ij} & = \alpha_j + \epsilon_{ij}\\
\epsilon_{ij} & \sim N(0,\sigma_y)\\
\alpha_j & \sim N(\mu_\alpha,\sigma_\alpha)\\
\mu_\alpha & \sim N(0,10)\\
\sigma_y,\sigma_\alpha & \sim N^+(0,1)\,,
\end{align*}
a variable intercept model (model 4),
\begin{align*}
y_{ij} & = \alpha_j + x_{ij} \beta + \epsilon_{ij}\\
\epsilon_{ij} & \sim N(0,\sigma_y)\\
\alpha_j & \sim N(\mu_\alpha,\sigma_\alpha)\\
\mu_\alpha, \beta & \sim N(0,10)\\
\sigma_y,\sigma_\alpha & \sim N^+(0,1)\,,
\end{align*}
a variable slope model (model 5),
\begin{align*}
y_{ij} & = \alpha + x_{ij} \beta_j + \epsilon_{ij}\\
\epsilon_{ij} & \sim N(0,\sigma_y)\\
\beta_j & \sim N(\mu_\beta,\sigma_\beta)\\
\mu_\beta, \alpha & \sim N(0,10)\\
\sigma_y,\sigma_\beta & \sim N^+(0,1)\,,
\end{align*}
a variable intercept and slope model (model 6),
\begin{align*}
    y_{ij} & = \alpha_j + x_{ij} \beta_j + \epsilon_{ij}\\
    \alpha_j & \sim N(\mu_\alpha, \sigma_\alpha)\\
    \beta_j & \sim N(\mu_\beta, \sigma_\beta)\\
    \mu_\alpha, \mu_\beta & \sim N(0, 10)\\
    \sigma_y,\sigma_\alpha,\sigma_\beta & \sim N^+(0,1)\,,
\end{align*}
and finally a model with county level covariates and county level intercepts
\begin{align*}
    y_{ij} & = \alpha_j + x_{ij} \beta_1 + u_{j} \beta_2 + \epsilon_{ij}\\
    \alpha_j & \sim N(\mu_\alpha, \sigma_\alpha)\\
    \beta,\mu_\alpha & \sim N(0, 10)\\
    \sigma_y,\sigma_\alpha & \sim N^+(0,1)\,,
\end{align*}
where $u_j$ is the log uranium level in the county. The Stan code used can be found below.

\section*{Stan models}

\subsection{Linear regression model}

\lstinputlisting[language=Stan,basicstyle=\footnotesize\ttfamily]{lr_target_full.stan}

\newpage
\subsection{Radon pooled model (1)}
\lstinputlisting[language=Stan,basicstyle=\footnotesize\ttfamily]{radon_pooling_hoprior.stan}

\newpage
\subsection{Radon pooled model (2)}
\lstinputlisting[language=Stan,basicstyle=\footnotesize\ttfamily]{radon_no_pooling_hoprior.stan}

\newpage
\subsection{Radon partially pooled model (3)}
\lstinputlisting[language=Stan,basicstyle=\footnotesize\ttfamily]{radon_partial_pooling2_hoprior.stan}

\newpage
\subsection{Variable intercept model (4)}
\lstinputlisting[language=Stan,basicstyle=\footnotesize\ttfamily]{radon_variable_intercept_hoprior.stan}

\newpage
\subsection{Variable slope model (5)}
\lstinputlisting[language=Stan,basicstyle=\footnotesize\ttfamily]{radon_variable_slope_hoprior.stan}

\newpage
\subsection{Variable intercept and slope model (6)}
\lstinputlisting[language=Stan,basicstyle=\footnotesize\ttfamily]{radon_variable_intercept_slope2_hoprior.stan}

\newpage
\subsection{Hierarchical intercept model (7)}
\lstinputlisting[language=Stan,basicstyle=\footnotesize\ttfamily]{radon_hiearchical_intercept2_hoprior.stan}

\end{document}